\date{}
\newtheorem{theor}{Theorem}
\begin{document}

\title{A novel approach for Fair Principal Component Analysis based on eigendecomposition}

\author{Guilherme Dean Pelegrina, Leonardo Tomazeli Duarte\footnote{G. Pelegrina and L. Duarte are with the School of Applied Sciences (FCA), University of Campinas (UNICAMP), Limeira, Brazil. E-mail: guidean@unicamp.br, leonardo.duarte@fca.unicamp.br.}}

\maketitle

\begin{abstract}

Principal component analysis (PCA), a ubiquitous dimensionality reduction technique in signal processing, searches for a projection matrix that minimizes the mean squared error between the reduced dataset and the original one. Since classical PCA is not tailored to address concerns related to fairness, its application to actual problems may lead to disparity in the reconstruction errors of different groups (e.g., men and women, whites and blacks, etc.), with potentially harmful consequences such as the introduction of bias towards sensitive groups. Although several fair versions of PCA have been proposed recently, there still remains a fundamental gap in the search for algorithms that are simple enough to be deployed in real systems. To address this, we propose a novel PCA algorithm which tackles fairness issues by means of a simple strategy comprising a one-dimensional search which exploits the closed-form solution of PCA. As attested by numerical experiments, the proposal can significantly improve fairness with a very small loss in the overall reconstruction error and without resorting to complex optimization schemes. Moreover, our findings are consistent in several real situations as well as in scenarios with both unbalanced and balanced datasets.

\end{abstract}

\section{Introduction}
\label{sec:intro}

Dimensionality reduction techniques have been used in signal processing and machine learning problems in order to deal with high dimensional datasets and, therefore, to enhance data visualization and reduce the complexity of learning algorithms~\cite{Kaski2011,Huang2019,Reddy2020}. Among such techniques, one of the most used is the Principal Component Analysis (PCA)~\cite{Jolliffe2002}, which has been applied in several problems~\cite{Kang2016,Zhao2019,Feng2019,Chakraborty2020,Rajani2020}. In summary, PCA aims at reducing the dimensionality of a dataset while preserving as much information as possible from this dataset. Besides maximizing the retained information, there is also an interest in the development of methods that avoid bias when providing dimensionality reduction. Indeed, the classical PCA formulation does not take into account different sensitive groups when projecting the data. As a consequence, the reduced dataset may contain distinct representation errors for those different groups, which in turn may lead to bias for certain sensitive groups. Therefore, unless strategies to mitigate bias are deployed, the application of PCA in machine-learning-based systems may suffer from fairness issues.

Several recent studies have addressed bias and fairness issues in real world problems~\cite{Barocas2019,Mhasawade2021,Booth2021,Cheong2021}. For instance, a topic of interest is the trade-off between model accuracy and fairness~\cite{Kleinberg2017,Dressel2018,Haas2019,Zhang2020,Rodolfa2021,Zhang2021}. In PCA-based dimensionality reduction, trade-offs between the quality of representation and fairness have also been studied in the literature~\cite{Pelegrina2021,Kamani2022}. For instance, a central question is how much one is willing to lose in the overall representation error in order to decrease the disparity between the sensitive groups. Aiming at reducing such a disparity, several works in the literature proposed fair PCA-based dimensionality reduction techniques~\cite{Samadi2018,Olfat2019,Tantipongpipat2019,Morgenstern2019,Pelegrina2021,Zalcberg2021,Kamani2022}.

There are basically two main lines of reasoning in these approaches. For instance, in the algorithm FairPCA~\cite{Samadi2018}, as well as in other related approaches~\cite{Kamani2022,Tantipongpipat2019,Morgenstern2019}, fairness is measured by means of the loss suffered by each sensitive group with respect to their individual optimal projection. In the optimal scenario, the proposed algorithm will find a projection matrix that leads to the same average loss for each sensitive group. On the other hand, in the Multi-Objective Fair Principal Component Analysis (MOFPCA) algorithm~\cite{Pelegrina2021}, one simply measures fairness by taking the squared difference between the averaged reconstruction errors of both sensitive groups. Therefore, in this case, the fairest scenario is the one that minimizes the disparity between the groups. Moreover, the formulation in MOFPCA does not require to find $r$ projections vectors that improve fairness. One may exploit the eigenvectors provided by the classical PCA and, in a different combination (i.e., not necessarily the eigenvectors associated with the $r$ highest eigenvalues), evaluate if fairness was improved without a large loss in the overall reconstruction error. The price that MOFPCA pays is that the projection vectors are restricted to the ones provided by the classical PCA.

In this paper, we propose a novel approach to exploit fairness in PCA-based dimensionality reduction. Similarly as in the MOFPCA, we also consider the disparity between the reconstructions errors as a fairness measure. However, we exploit any projection matrix that improve fairness in the reduced space. For this purpose, we formulate an optimization problem whose cost function includes both overall reconstruction error and the adopted fairness measure. In order to cast this cost function as a mono-objective optimization problem, the objectives are weighted by a scalar factor. The interesting aspect is that, given a predefined weighted factor, the solution has a closed form based on the eigenvector/eigenvalue decomposition. Therefore, a first contribution of this paper consists in formulating an optimization problem that provides a compromise solution between the overall reconstruction error and the fairness measure, which depends on the adopted weighted factor and can be solved by an eigendecomposition.

Moreover, since a central concern is reducing the disparity between the sensitive groups, one may investigate which weighted factor lead the the fairest scenario. Therefore, the other contribution of this paper is to propose an efficient fair PCA-based dimensionality reduction algorithm that minimizes the disparity in the averaged reconstruction errors by means of a one-dimensional search in closed form solutions. Note the existing fair PCA methods either require a complex mono-optimization scheme or a multiobjective scheme which is limited to the principal components which stem from the classical PCA algorithm.

The rest of this paper is organized as follows. Section~\ref{sec:dispPCA} describes the classical PCA formulation and the possible disparities in dimensionality reduction tasks. In Section~\ref{sec:propos}, we present the proposed approach for fair dimensionality reduction. The numerical experiments as well as the obtained results are discussed in Section~\ref{sec:exp}. Finally, our conclusions and future perspectives are presented in Section~\ref{sec:concl}.

\section{Disparities in principal component analysis}
\label{sec:dispPCA}

Let $\mathbf{X} \in \mathbb{R}^{n \times d}$ denote a dataset with $n$ samples and $d$ attributes. For convenience, let us also assume that $\mathbf{X}$ has zero mean (otherwise, one should center the data by extracting its mean). With the purpose of reducing the dimensionality of $\mathbf{X}$ from $d$ to $r$-dimensional samples, the goal in PCA is to find a projection matrix $\mathbf{U} \in \mathbb{R}^{d \times r}$ such that the projected data $\mathbf{X}\mathbf{U}$ are uncorrelated and retain as much information from $\mathbf{X}$ as possible. A typical solution of PCA is obtained by tackling the following optimization problem:
\begin{equation}
\begin{array}{ll}
    \underset{\mathbf{U}}{\min} & \left\|\mathbf{X} - \mathbf{X}\mathbf{U}\mathbf{U}^T\right\|_{F}^{2} \\
		\text{s.t.} & \mathbf{U}^T\mathbf{U} = \mathbf{I}
\end{array},
\label{eq:classical_PCA}
\end{equation}
where $\mathcal{R}_{\mathbf{X}}(\mathbf{U}) = \left\|\mathbf{X} - \mathbf{XUU}^T\right\|_{F}^{2}$ represents the overall reconstruction error, $\left\| \cdot \right\|_{F}$ is the Frobenius norm~\cite{Golub2013} and $\mathbf{U}^T\mathbf{U} = \mathbf{I}$, where $\mathbf{I}$ is the identity matrix, ensures that $\mathbf{U}$ is orthogonal. It is easy to show that minimizing $ \left\|\mathbf{X} - \mathbf{XUU}^T\right\|_{F}^{2}$ is equivalent to maximize $\text{tr}\left( \mathbf{X}\mathbf{X}^T \right) - \text{tr}\left( \mathbf{U}^T\mathbf{X}^T\mathbf{X}\mathbf{U} \right)$ (or maximize $\text{tr}\left( \mathbf{U}^T\mathbf{C}_{\mathbf{X}}\mathbf{U} \right)$, where $\mathbf{C}_{\mathbf{X}}=\frac{1}{n}\mathbf{X}^T\mathbf{X}$ is the covariance matrix of $\mathbf{X}$, since $\text{tr}\left( \mathbf{X}\mathbf{X}^T \right)$ is a constant)~\cite{Bro2014,Wang2018,Huang2021}. Moreover, since $\text{tr}\left( \mathbf{U}^T\mathbf{X}^T\mathbf{X}\mathbf{U} \right) = \left\|\mathbf{X}\mathbf{U}\right\|_{F}^{2} = \sum_{j=1}^r \mathbf{u}_j^T\mathbf{X}^T\mathbf{X}\mathbf{u}_j$, minimizing the reconstruction error is also equivalent to maximize the total variance of the projected data. It is also known in the literature~\cite{Jolliffe2002} that the solution of PCA is achieved by setting the columns of $\mathbf{U}$ as the eigenvectors of $\mathbf{C}_{\mathbf{X}}$ associated with its highest eigenvalues (see Appendix~\ref{app:pca_sol}).

Although one expects to optimize the overall model performance (i.e., to achieve the minimum reconstruction error in PCA), there is a growing interest in the literature in the development of automatic decision systems that are also fair. We here consider that a fair algorithm should not lead to disparities between sensitive groups (e.g., males and females, whites and blacks, etc.). Therefore, a concern in the classical PCA is that it does not take into account possible disparities between different groups when projecting the dataset. For instance, assume that $\mathbf{X} = \left[\mathbf{X}_A; \mathbf{X}_B \right]$, where $\mathbf{X}_A \in \mathbb{R}^{n_A \times d}$ and $\mathbf{X}_B \in \mathbb{R}^{n_B \times d}$ represent different sensitive groups with $n_A$ and $n_B$ samples, respectively. Without loss of generality, let us also assume that group $A$ is privileged in comparison with group $B$. This means that the application of the classical PCA in $\mathbf{X}$ leads to a reduced dataset in which the average reconstruction error of group $A$ is lower in comparison with group $B$, i.e., $\bar{\mathcal{R}}_{\mathbf{X}_A}(\mathbf{U}) = \frac{1}{n_A}\mathcal{R}_{\mathbf{X}_A}(\mathbf{U}) < \frac{1}{n_B}\mathcal{R}_{\mathbf{X}_B}(\mathbf{U}) = \bar{\mathcal{R}}_{\mathbf{X}_B}(\mathbf{U})$. Therefore, this disparity may harm the representation of group $B$ in comparison with group $A$ in the projected data. In order to illustrate this scenario, let us consider the example presented in Figures~\ref{fig:illustr_pca1},~\ref{fig:illustr_pca2} and~\ref{fig:illustr_pca3}. One may see that, by projecting the dataset into the first principal component, the variance of group $A$ is greater than the variance of group $B$. Therefore, group $B$ has a worst representation in comparison with group $A$.

\begin{figure*}[ht]
\centering
\subfloat[]{\includegraphics[width=2.1in]{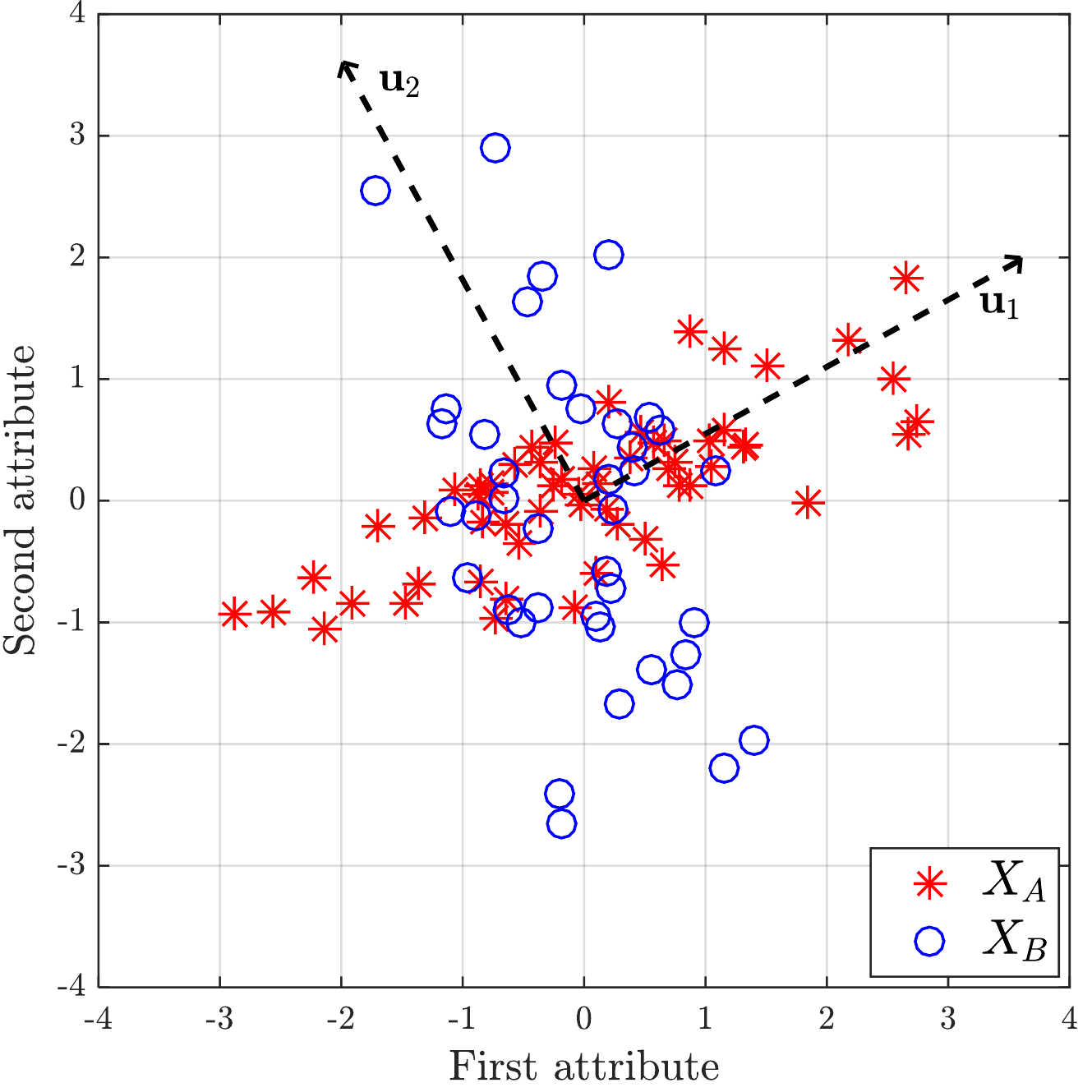}
\label{fig:illustr_pca1}}
\hfil
\subfloat[]{\includegraphics[width=2.1in]{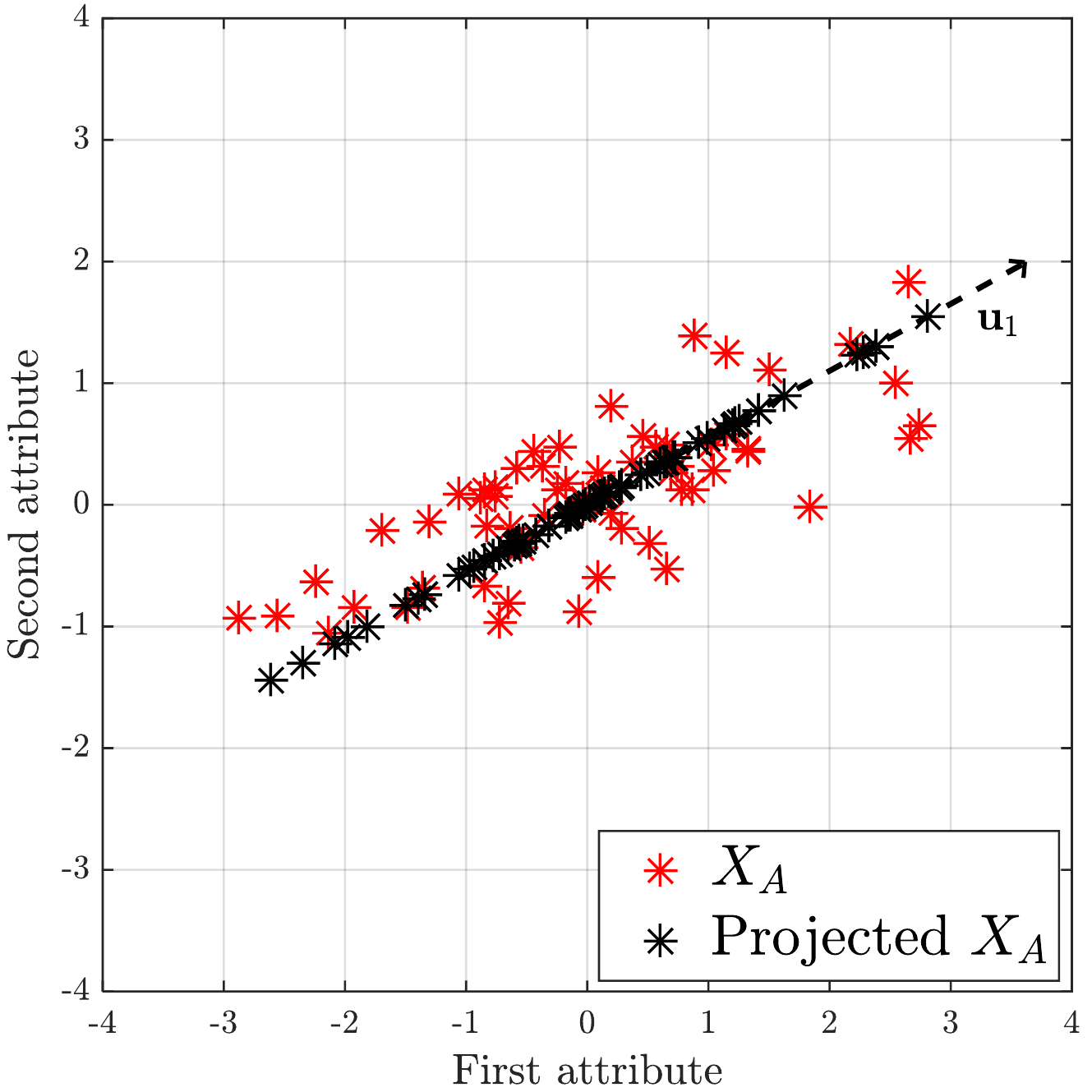}
\label{fig:illustr_pca2}}
\hfil
\subfloat[]{\includegraphics[width=2.1in]{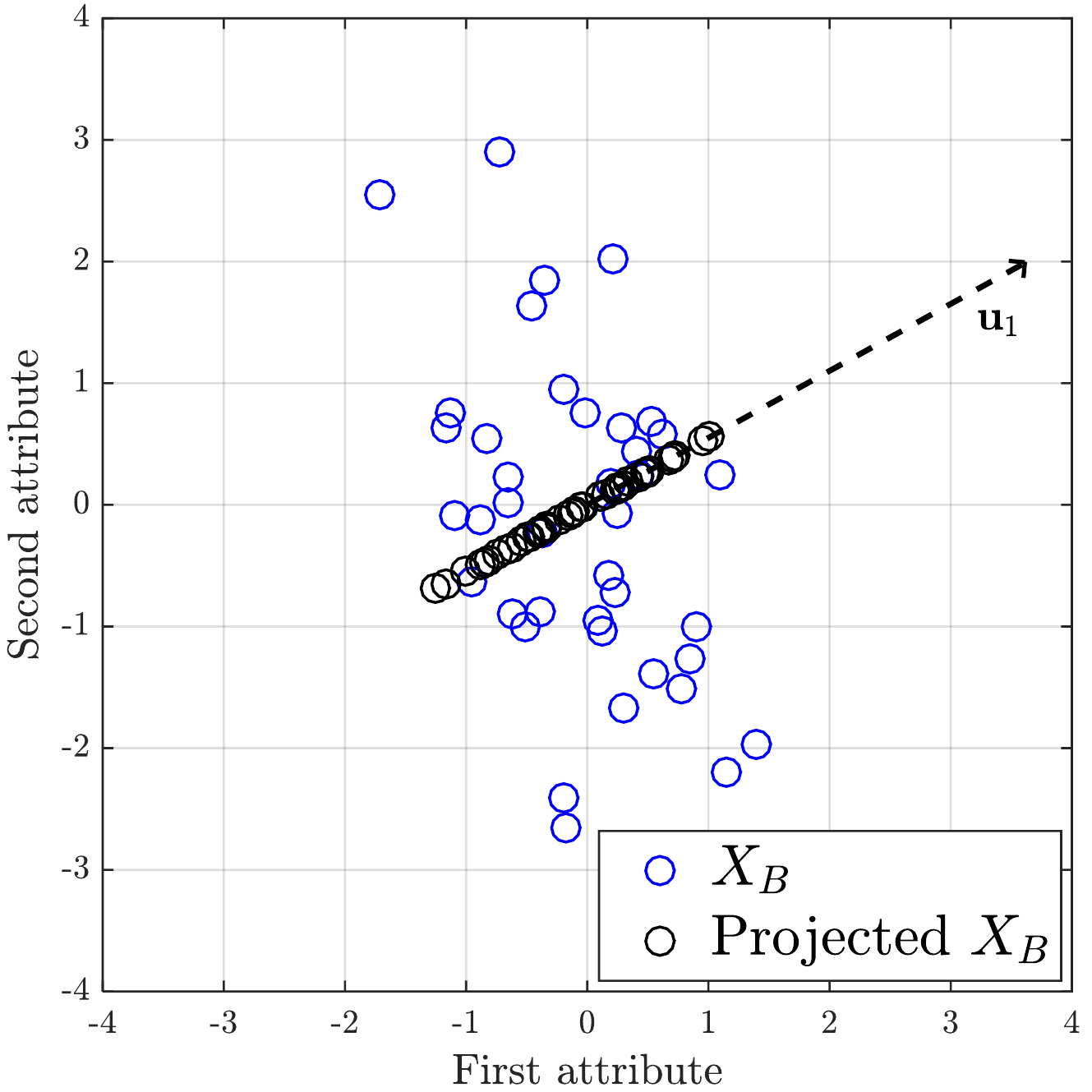}
\label{fig:illustr_pca3}}
\hfill
\subfloat[]{\includegraphics[width=2.1in]{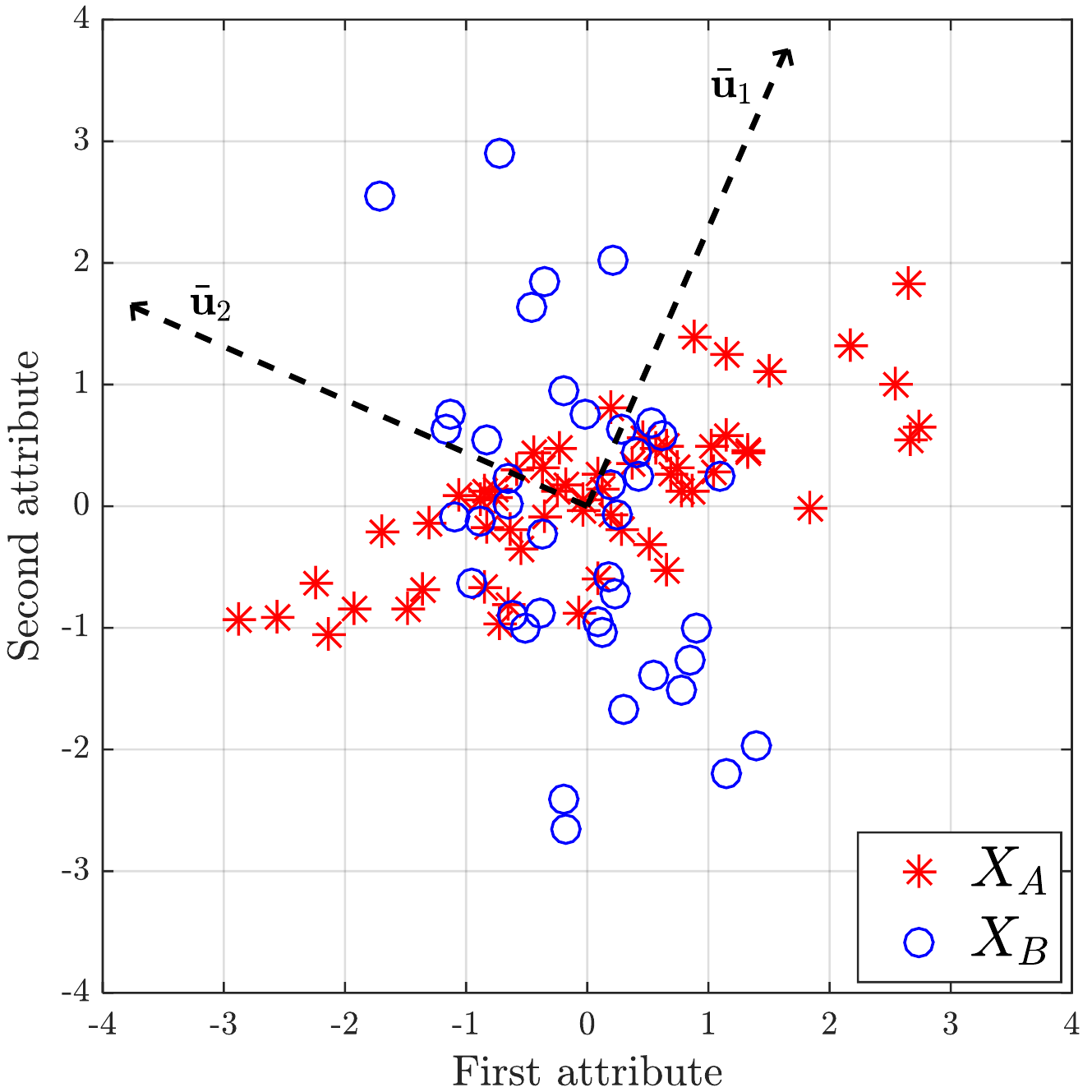}
\label{fig:illustr_fpca1}}
\hfil
\subfloat[]{\includegraphics[width=2.1in]{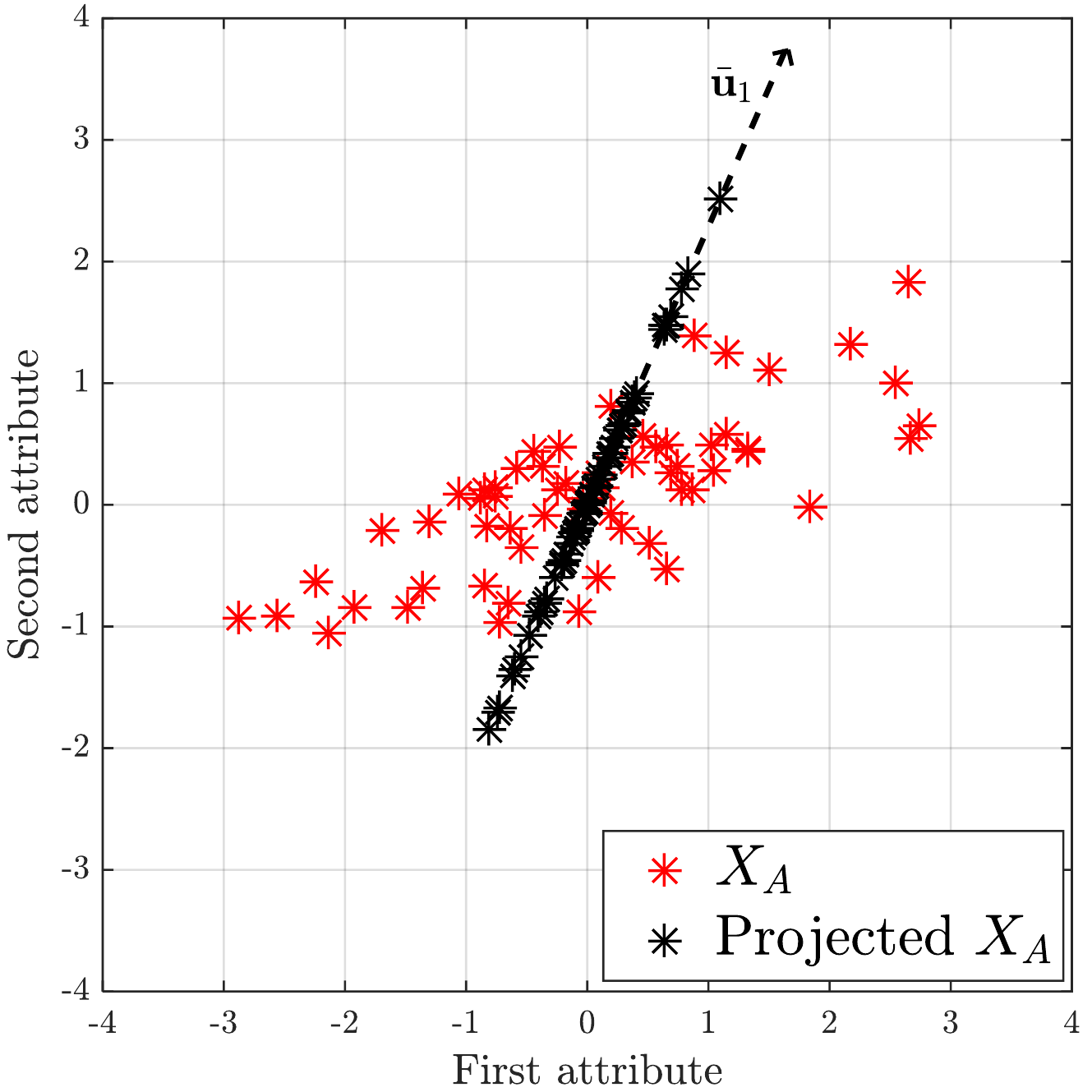}
\label{fig:illustr_fpca2}}
\hfil
\subfloat[]{\includegraphics[width=2.1in]{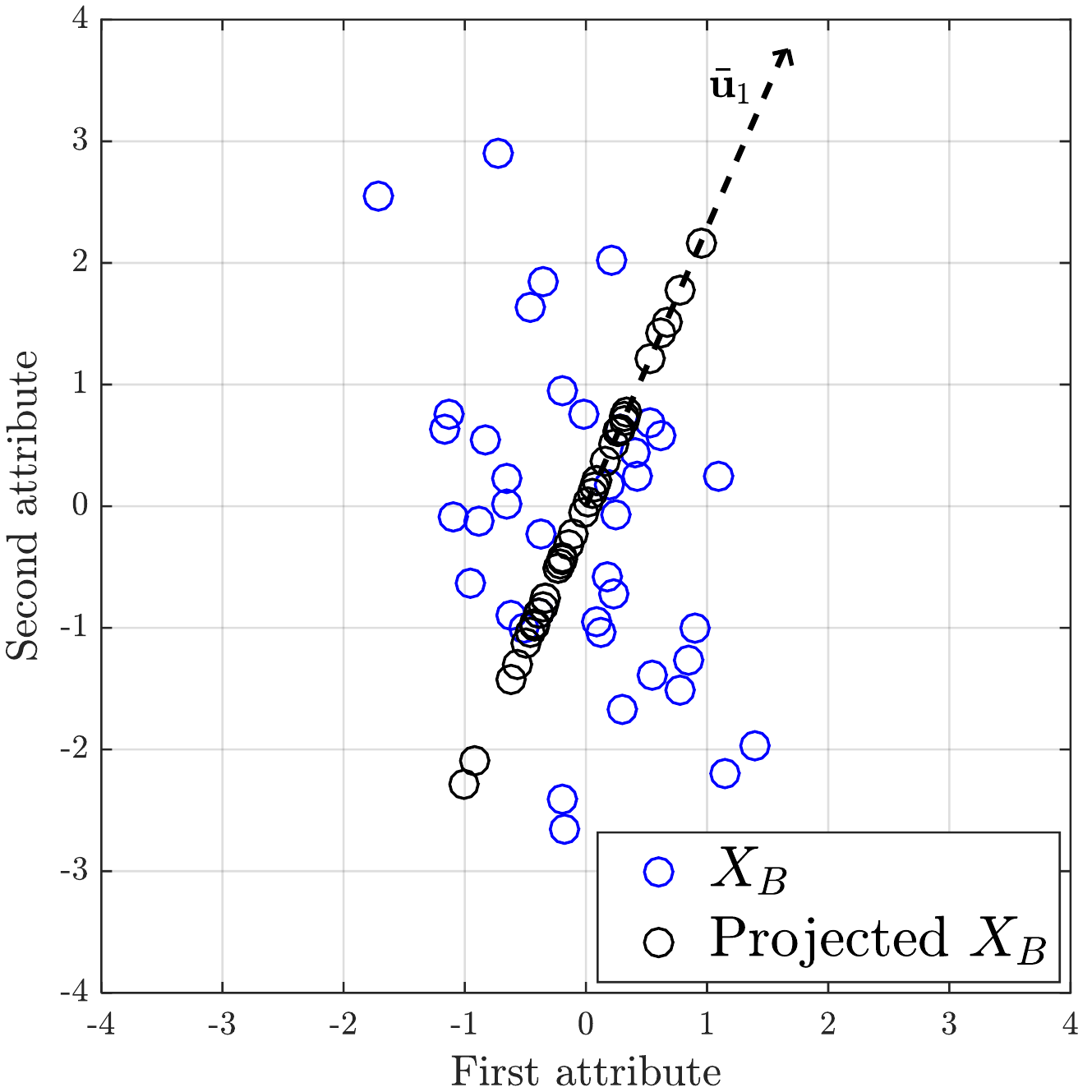}
\label{fig:illustr_fpca3}}
\caption{Illustrative situation in which PCA generates disparities between different groups.~\protect\subref{fig:illustr_pca1} Scatter plot of dataset $\mathbf{X}$ and the projection vectors $\mathbf{u}_1, \mathbf{u}_2$ obtained by the classical PCA.~\protect\subref{fig:illustr_pca2} Projection of group $A$ into $\mathbf{u}_1$.~\protect\subref{fig:illustr_pca3} Projection of group $B$ into $\mathbf{u}_1$.~\protect\subref{fig:illustr_fpca1} Scatter plot of dataset $\mathbf{X}$ and the projection vectors $\bar{\mathbf{u}}_1, \bar{\mathbf{u}}_2$ obtained by our fair PCA-based proposal.~\protect\subref{fig:illustr_fpca2} Fair projection of group $A$ into $\bar{\mathbf{u}}_1$.~\protect\subref{fig:illustr_fpca3}, Fair projection of group $B$ into $\bar{\mathbf{u}}_1$.}
\label{fig:illustr_disp}
\end{figure*}

In order to avoid disparities in the reconstruction errors, it is convenient to adopt a disparity measure when performing dimensionality reduction. We here propose the following one:
\begin{equation}
    \mathcal{D}_{\mathbf{X}_B,\mathbf{X}_A}(\mathbf{U}) = \frac{\left\|\mathbf{X}_B - \mathbf{X}_B\mathbf{UU}^T\right\|_{F}^{2}}{n_B} - \frac{\left\|\mathbf{X}_A - \mathbf{X}_A\mathbf{UU}^T\right\|_{F}^{2}}{n_A},
\label{eq:disp_meas}
\end{equation}
which calculates the disparity between the average reconstruction errors of the harmed group ($\bar{\mathcal{R}}_{\mathbf{X}_B}$) and the privileged one ($\bar{\mathcal{R}}_{\mathbf{X}_A}$). For instance, if $\mathcal{D}_{\mathbf{X}_B,\mathbf{X}_A}(\mathbf{U}) = 0$, we achieved the fairest scenario. If one considers the example illustrated in Figures~\ref{fig:illustr_fpca1},~\ref{fig:illustr_fpca2} and~\ref{fig:illustr_fpca3}, one may see that, by assuming another projection vector, the retained information for groups $A$ and $B$ are similar, which indicates that the disparity between their representations was mitigated.

\section{Proposed fair PCA-based dimensionality reduction approach}
\label{sec:propos}

Our formulation for fair principal component analysis consists in minimizing both overall reconstruction error and $\mathcal{D}_{\mathbf{X}_B,\mathbf{X}_A}(\mathbf{U})$. Therefore, we deal with the following optimization problem:
\begin{equation}
\begin{array}{ll}
    \underset{\mathbf{U}}{\min} & \alpha \bar{\mathcal{R}}_{\mathbf{X}}(\mathbf{U}) + \left( 1 - \alpha \right) \mathcal{D}_{\mathbf{X}_B,\mathbf{X}_A}(\mathbf{U})  \\
		\text{s.t.} & \mathbf{U}^T\mathbf{U} = \mathbf{I},
\end{array}
\label{eq:fair_PCA}
\end{equation}
where $\bar{\mathcal{R}}_{\mathbf{X}}(\mathbf{U}) = \frac{1}{n}\mathcal{R}_{\mathbf{X}}(\mathbf{U})$ and $\alpha \in [0,1]$ is the weighting factor that controls the importance given to the each objective. Note that, if $\alpha = 1$,~\eqref{eq:fair_PCA} is equivalent to the classical PCA formulation. However, the lower is the value of $\alpha$, the greater is the importance assigned to the disparity measure. Therefore, different values of $\alpha$ lead to trade-off solutions between the fairness metric and the overall reconstruction error.

Similarly as in the classical PCA, the interesting aspect of our proposal is that the optimal solution can be also achieved by means of an eigendecomposition. This finding is stated in the following theorem: 

\begin{theor}{}
Assume a predefined weighting factor $\alpha$. The optimal solution of the fair principal component analysis problem expressed in~\eqref{eq:fair_PCA} is given by the eigenvectors of the weighed covariance matrices $\mathbf{\hat{C}} = \left( \alpha \frac{ \mathbf{X}^T\mathbf{X}}{n} + \left( 1 - \alpha \right) \left( \frac{ \mathbf{X}_A^T\mathbf{X}_A }{n_A} - \frac{ \mathbf{X}_B^T\mathbf{X}_B }{n_B} \right) \right)$ associated with its highest eigenvalues.
\end{theor}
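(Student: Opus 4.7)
The plan is to mimic the derivation of the classical PCA solution referenced in Appendix~\ref{app:pca_sol}: convert the composite objective into a single trace maximization over the Stiefel manifold and then invoke the Ky Fan / Rayleigh--Ritz characterization of the top eigenvectors of a symmetric matrix. Everything will hinge on the identity
\[ \|\mathbf{Y} - \mathbf{Y}\mathbf{U}\mathbf{U}^T\|_F^2 = \mathrm{tr}(\mathbf{Y}^T\mathbf{Y}) - \mathrm{tr}(\mathbf{U}^T\mathbf{Y}^T\mathbf{Y}\mathbf{U}), \]
which holds whenever $\mathbf{U}^T\mathbf{U} = \mathbf{I}$ and was already used by the authors to rewrite the classical PCA cost.

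First I would apply this identity to each of the three reconstruction terms that appear implicitly in~\eqref{eq:fair_PCA}: $\bar{\mathcal{R}}_{\mathbf{X}}$, $\bar{\mathcal{R}}_{\mathbf{X}_A}$, and $\bar{\mathcal{R}}_{\mathbf{X}_B}$. Each expansion splits into a data-dependent constant (the trace of the appropriate Gram matrix divided by the relevant sample count) and a term of the form $-\mathrm{tr}(\mathbf{U}^T\mathbf{C}\mathbf{U})$ involving the corresponding scaled covariance. Collecting the three pieces with the weights $\alpha$, $(1-\alpha)$, and $-(1-\alpha)$ that are dictated by the definition of $\mathcal{D}_{\mathbf{X}_B,\mathbf{X}_A}$, all constants lump into a single $\mathbf{U}$-independent additive term, and the remaining $\mathbf{U}$-dependent part collapses to $-\mathrm{tr}(\mathbf{U}^T\mathbf{\hat{C}}\mathbf{U})$ with $\mathbf{\hat{C}}$ the weighted covariance matrix stated in the theorem. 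After this reduction, minimizing the original objective over $\{\mathbf{U} : \mathbf{U}^T\mathbf{U} = \mathbf{I}\}$ is equivalent to maximizing $\mathrm{tr}(\mathbf{U}^T\mathbf{\hat{C}}\mathbf{U})$ over the same feasible set.

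The last step is to apply Ky Fan's trace theorem (the same result invoked in Appendix~\ref{app:pca_sol} for the classical case) to conclude that this maximum equals the sum of the $r$ largest eigenvalues of $\mathbf{\hat{C}}$ and is attained precisely when the columns of $\mathbf{U}$ span the associated top eigenspace, which yields exactly the characterization claimed.

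I expect the main obstacle to be bookkeeping rather than anything structural. The objective mixes a nonnegative reconstruction error (to be minimized) with a signed disparity that can in principle be driven negative, so I have to be meticulous when flipping ``minimize'' into ``maximize'' to ensure the signs multiplying $\mathbf{X}_A^T\mathbf{X}_A/n_A$ and $\mathbf{X}_B^T\mathbf{X}_B/n_B$ end up consistent with the statement. A related subtlety is that $\mathbf{\hat{C}}$ need not be positive semidefinite once $\alpha < 1$, since the disparity contribution subtracts a covariance; however, the Rayleigh--Ritz / Ky Fan argument applies to any real symmetric matrix, so ``highest eigenvalues'' must simply be read algebraically (not as largest in magnitude), and the corresponding eigenvectors still form a valid orthonormal projection matrix.
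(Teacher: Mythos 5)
Your outline follows essentially the same route as the paper: expand each reconstruction error via $\|\mathbf{Y}-\mathbf{Y}\mathbf{U}\mathbf{U}^T\|_F^2=\mathrm{tr}(\mathbf{Y}^T\mathbf{Y})-\mathrm{tr}(\mathbf{U}^T\mathbf{Y}^T\mathbf{Y}\mathbf{U})$, collapse the weighted objective into a constant minus a single trace, and read off the top eigenvectors. The one genuine difference is the final step: the paper re-runs the greedy, column-by-column Lagrange-multiplier argument of its appendix (and in doing so has to replace the uncorrelatedness condition $\mathbf{u}_j^T\mathbf{C}_{\mathbf{X}}\mathbf{u}_{j-1}=0$ by $\mathbf{u}_j^T\hat{\mathbf{C}}\mathbf{u}_{j-1}=0$, conceding that the projected data are no longer uncorrelated), whereas you invoke the Ky Fan / Rayleigh--Ritz characterization over the Stiefel manifold directly. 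Your version is cleaner and is in fact the right tool here, precisely because $\hat{\mathbf{C}}$ is symmetric but generally indefinite for $\alpha<1$ --- a point you correctly raise and the paper passes over.

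However, the one step you explicitly defer --- the sign bookkeeping --- is exactly where the argument cannot be discharged as you state it. Carrying out the collection: since $\mathcal{D}_{\mathbf{X}_B,\mathbf{X}_A}=\bar{\mathcal{R}}_{\mathbf{X}_B}-\bar{\mathcal{R}}_{\mathbf{X}_A}$ and each $\bar{\mathcal{R}}$ contributes its Gram trace with a \emph{minus} sign, the disparity term adds $+(1-\alpha)\,\mathrm{tr}(\mathbf{U}^T\mathbf{X}_A^T\mathbf{X}_A\mathbf{U})/n_A-(1-\alpha)\,\mathrm{tr}(\mathbf{U}^T\mathbf{X}_B^T\mathbf{X}_B\mathbf{U})/n_B$ to the objective, so after factoring out the leading minus sign the matrix inside the trace is $\alpha\,\mathbf{X}^T\mathbf{X}/n+(1-\alpha)(\mathbf{X}_B^T\mathbf{X}_B/n_B-\mathbf{X}_A^T\mathbf{X}_A/n_A)$, i.e.\ the \emph{negation} of the disparity part of the $\hat{\mathbf{C}}$ printed in the theorem. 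Your derivation, done correctly, therefore proves the statement with that corrected matrix, not with the $\hat{\mathbf{C}}$ as written; the paper's own proof commits the same sign slip in passing from its third to its fourth displayed line, so the error originates in the source, but your assertion that the signs ``end up consistent with the statement'' is the one claim in your outline that is false as written. You should either carry the computation through and state the corrected $\hat{\mathbf{C}}$, or flag the inconsistency explicitly.
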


\begin{proof}
Since $\alpha \bar{\mathcal{R}}_{\mathbf{X}}(\mathbf{U}) + \left( 1 - \alpha \right) \mathcal{D}_{\mathbf{X}_B,\mathbf{X}_A}(\mathbf{U}) =  \alpha \bar{\mathcal{R}}_{\mathbf{X}}(\mathbf{U}) + \left( 1 - \alpha \right) \left( \bar{\mathcal{R}}_{\mathbf{X}_B} - \bar{\mathcal{R}}_{\mathbf{X}_A} \right)$, where $\bar{\mathcal{R}}_{\mathbf{X}}(\mathbf{U}) = \frac{\left\|\mathbf{X} - \mathbf{X}\mathbf{UU}^T\right\|_{F}^{2}}{n} = \frac{ \text{tr}\left( \mathbf{X}\mathbf{X}^T \right) - \text{tr}\left( \mathbf{U}^T\mathbf{X}^T\mathbf{X}\mathbf{U} \right) }{n}$ (and similarly for $\bar{\mathcal{R}}_{\mathbf{X}_A}$ and $\bar{\mathcal{R}}_{\mathbf{X}_B}$), it is easy to show that
\begin{equation}
\begin{array}{l}
      \alpha \bar{\mathcal{R}}_{\mathbf{X}}(\mathbf{U}) + \left( 1 - \alpha \right) \mathcal{D}_{\mathbf{X}_B,\mathbf{X}_A}(\mathbf{U}) \\
      = \kappa - \alpha \frac{\text{tr}\left( \mathbf{U}^T\mathbf{X}^T\mathbf{X}\mathbf{U} \right) }{n} \\
      \, \, \, \, \, + \left( 1 - \alpha \right) \left( \frac{ \text{tr}\left( \mathbf{U}^T\mathbf{X}_A^T\mathbf{X}_A\mathbf{U} \right) }{n_A} - \frac{ \text{tr}\left( \mathbf{U}^T\mathbf{X}_B^T\mathbf{X}_B\mathbf{U} \right) }{n_B} \right)  \\
      = \kappa - \text{tr}\left( \mathbf{U}^T \frac{ \alpha \mathbf{X}^T\mathbf{X}}{n} \mathbf{U} \right) \\
      \, \, \, \, \, + \text{tr}\left( \mathbf{U}^T \frac{ \left( 1 - \alpha \right) \mathbf{X}_A^T\mathbf{X}_A }{n_A} \mathbf{U} \right) - \text{tr}\left( \mathbf{U}^T \frac{ \left( 1 - \alpha \right) \mathbf{X}_B^T\mathbf{X}_B }{n_B} \mathbf{U} \right)  \\
      = \kappa - \text{tr}\left( \mathbf{U}^T \left( \alpha \frac{ \mathbf{X}^T\mathbf{X}}{n} + \left( 1 - \alpha \right) \left( \frac{  \mathbf{X}_A^T\mathbf{X}_A }{n_A} - \frac{ \mathbf{X}_B^T\mathbf{X}_B }{n_B} \right) \right)  \mathbf{U} \right)  \\
      = \kappa - \text{tr}\left( \mathbf{U}^T \mathbf{\hat{C}} \mathbf{U} \right)  \\
\end{array}
\end{equation}
where $\kappa = \alpha \frac{\text{tr}\left( \mathbf{X}\mathbf{X}^T \right)}{n} + \left( 1 - \alpha \right) \left( \frac{ \text{tr}\left( \mathbf{X}_B\mathbf{X}_B^T \right)}{n_B} - \frac{ \text{tr}\left( \mathbf{X}_A\mathbf{X}_A^T \right) )}{n_A} \right)$ and $\mathbf{\hat{C}} = \left( \alpha \frac{ \mathbf{X}^T\mathbf{X}}{n} + \left( 1 - \alpha \right) \left( \frac{ \mathbf{X}_A^T\mathbf{X}_A }{n_A} - \frac{ \mathbf{X}_B^T\mathbf{X}_B }{n_B} \right) \right)$. Since $\kappa$ is a constant, the minimization of $\alpha \bar{\mathcal{R}}_{\mathbf{X}}(\mathbf{U}) + \left( 1 - \alpha \right) \mathcal{D}_{\mathbf{X}_B,\mathbf{X}_A}(\mathbf{U})$ leads to the maximization of $\text{tr}\left( \mathbf{U}^T \mathbf{\hat{C}} \mathbf{U} \right)$. Similarly as in the classical PCA, our proposal also turns to the maximization of a trace operator. Therefore, one may follow the iterative approach to find the columns of $\mathbf{U}$. We start by the first principal component:
\begin{equation}
\label{eq:fpca_pc1}
\begin{array}{ll}
\displaystyle\max_{\mathbf{u}_1} & \mathbf{u}_1^T\mathbf{\hat{C}}\mathbf{u}_1 \\
\text{s.t.} & \mathbf{u}_1^T\mathbf{u}_1 = 1, \\
\end{array}
\end{equation}
This optimization problem is equivalent to the one presented in~\eqref{eq:pca_pc1}. Therefore, we know that $\mathbf{u}_1$ is equal to the eigenvector of $\mathbf{\hat{C}}$ associated with the highest eigenvalue $\lambda_1$. For the next $r-1$ projection vectors, we must include the constraint that ensures that $\mathbf{u}_j^T, \mathbf{u}_{j-1}=0$, for all $j=2, \ldots, r$ and solve the following the following optimization problems:
\begin{equation}
\label{eq:fpca_pc2}
\begin{array}{ll}
\displaystyle\max_{\mathbf{u}_j} & \mathbf{u}_j^T\mathbf{\hat{C}}\mathbf{u}_j \\
\text{s.t.} & \mathbf{u}_j^T\mathbf{u}_j = 1, \\
 & \mathbf{u}_j^T\mathbf{u}_{j-1} = 0, \\
\end{array}, \, \, \, \forall j=2, \ldots, r.
\end{equation}
However, instead of assuming that $\mathbf{u}_j^T\mathbf{C}_{\mathbf{X}}\mathbf{u}_{j-1} = 0$, for all $j=2, \ldots, r$, we here assume that $\mathbf{u}_j^T\mathbf{\hat{C}}\mathbf{u}_{j-1} = 0$, for all $j=2, \ldots, r$. As a consequence, we do not ensure that the projected data are uncorrelated. However, we may follow the derivation of the classical PCA and conclude that the columns of $\mathbf{U}$ are composed, in its columns, by the eigenvectors of $\mathbf{\hat{C}}$ associated with its highest eigenvalues.
\end{proof}

In summary, given a weighting factor $\alpha$, we attempt to reduce the disparities by solving an optimization problem very similar to the classical PCA. However, in our approach, the eigendecomposition of matrix $\mathbf{C}_{\mathbf{X}}$ takes into account the disparity between sensitive groups. Clearly, for $\alpha = 1$, one only minimizes the overall reconstruction error and does not reduce possible disparities (i.e., one may have $\bar{\mathcal{R}}_{\mathbf{X}_A}(\mathbf{U}) < \bar{\mathcal{R}}_{\mathbf{X}_B}(\mathbf{U})$). On the other hand, $\alpha = 0$ may minimize the disparity measure more than the necessary, which may invert the privileged group. In other words, one may achieve a projected data in which group $B$ has a better representation in comparison with group $A$ (i.e., $\bar{\mathcal{R}}_{\mathbf{X}_B}(\mathbf{U}) < \bar{\mathcal{R}}_{\mathbf{X}_A}(\mathbf{U})$). Therefore, it is possible that there is a value of $\alpha$ that leads to $\bar{\mathcal{R}}_{\mathbf{X}_A}(\mathbf{U}) \approx \bar{\mathcal{R}}_{\mathbf{X}_B}(\mathbf{U})$, i.e., that minimizes the considered fairness measure
\begin{equation}
\mathcal{F}_{\mathbf{X}_B,\mathbf{X}_A}(\mathbf{U}) = \left( \bar{\mathcal{R}}_{\mathbf{X}_B}(\mathbf{U}) - \bar{\mathcal{R}}_{\mathbf{X}_A}(\mathbf{U}) \right)^2.
\label{eq:fair_meas}
\end{equation}
Note that~\eqref{eq:fair_meas} is the square of $\mathcal{D}_{\mathbf{X}_B,\mathbf{X}_A}(\mathbf{U})$. Therefore, when minimizing $\mathcal{F}_{\mathbf{X}_B,\mathbf{X}_A}(\mathbf{U})$, one reduces the disparity and avoids the inversion of the privileged group. In the sequel, we present the proposed algorithms for fair dimensionality reduction.

\subsection{Unconstrained fair dimensionality reduction}

Since fixed $\alpha$ the optimization problem has a closed solution, the (unconstrained) fair PCA-based dimensionality reduction algorithm proposed in this paper (called here \textit{u-FPCA}) consists in a one-dimensional search (in $\alpha$) of eigenvectors/eigenvalues solutions aiming at minimizing the fairness measure~\eqref{eq:fair_meas}. Mathematically, this formulation can be expressed as follows:
\begin{equation}
\begin{array}{ll}
    \underset{\alpha}{\min} & \left( \frac{\left\|\mathbf{X}_B - \mathbf{X}_B\mathbf{UU}^T\right\|_{F}^{2}}{n_B} - \frac{\left\|\mathbf{X}_A - \mathbf{X}_A\mathbf{UU}^T\right\|_{F}^{2}}{n_A} \right)^2,
\end{array}
\label{eq:fair_PCA_proposal}
\end{equation}
where the columns of $\mathbf{U} \in \mathbb{R}^{d \times r}$ are composed by the eigenvectors of $\mathbf{\hat{C}}$ associated with its highest eigenvalues. Therefore, it is a simple and efficient approach that can be solved, for instance, by applying a golden section search algorithm~\cite{Vajda1989}. Based on the dataset $\mathbf{X}$ divided into the sensitive groups $G_1$ and $G_2$, the steps of u-FPCA are presented in Algorithm~\ref{alg:ufpca} (in this algorithm, $eig \left(\mathbf{P}, q \right)$ represents the operator that returns the $q$ eigenvectors of $\mathbf{P}$ associated with its $q$ highest eigenvalues). Recall that, without loss of generality, we assume that group $A$ is the privileged one when applying the classical PCA.

\begin{algorithm}[h!t]
    \caption{(\textit{u-FPCA})}
    \label{alg:ufpca}
		\begin{algorithmic}
				\STATE \textbf{Input:} Dataset $\mathbf{X}$, sensitive data $\mathbf{X}_{G_1}$ and $\mathbf{X}_{G_2}$, data sizes $n$, $n_{G_1}$ and $n_{G_2}$, reduced dimension $r$ and tolerance $tol$ for the golden section search.
				
				\STATE \textbf{Output:} Fair projection matrix $\mathbf{U}$.
				
				\STATE 1: \textbf{Compute the covariance matrix of $\mathbf{X}$}: $\mathbf{C}_{\mathbf{X}} \leftarrow \frac{\mathbf{X}^T\mathbf{X}}{n}$
				
				\STATE 2: \textbf{Compute PCA}: $\bar{\mathbf{U}} \leftarrow eig \left(\mathbf{C}_{\mathbf{X}}, r \right)$
				
				\STATE 3: \textbf{Compute the averaged reconstruction errors:} $\bar{\mathcal{R}}_{\mathbf{X}_{G_1}}^{PCA}(\bar{\mathbf{U}}) \leftarrow \frac{\left\|\mathbf{X}_{G_1} - \mathbf{X}_{G_1}\bar{\mathbf{U}}\bar{\mathbf{U}}^T\right\|_{F}^{2}}{n_{G_1}}$ and $\bar{\mathcal{R}}_{\mathbf{X}_{G_2}}^{PCA}(\bar{\mathbf{U}}) \leftarrow \frac{\left\|\mathbf{X}_{G_2} - \mathbf{X}_{G_2}\bar{\mathbf{U}}\bar{\mathbf{U}}^T\right\|_{F}^{2}}{n_{G_2}}$
				
				\STATE 4: \textbf{Define the privileged and harmed groups}:
				\IF{$\bar{\mathcal{R}}_{\mathbf{X}_{G_1}}^{PCA}(\bar{\mathbf{U}}) \leq \bar{\mathcal{R}}_{\mathbf{X}_{G_2}}^{PCA}(\bar{\mathbf{U}})$}
						\STATE $A \leftarrow G_1$, $n_A = n_{G_1}$, $B \leftarrow G_2$ and $n_B = n_{G_2}$
				\ELSE
						\STATE $A \leftarrow G_2$, $n_A = n_{G_2}$, $B \leftarrow G_1$ and $n_B = n_{G_1}$
				\ENDIF
				
				\STATE 5: \textbf{Apply the golden section search}:
				
				\STATE \textbf{Define}: $\alpha_0 \leftarrow 0$, $\alpha_1 \leftarrow 1$ and $g_{ratio} \leftarrow (\sqrt{5} + 1)/2$
				
				\WHILE{$\alpha_1 - \alpha_0 \leq tol$}
				        \STATE \textbf{Compute the candidates for $\alpha$}: $\alpha_{1a} \leftarrow \alpha_0 + (\alpha_1 - \alpha_0)/g_{ratio}$ and $\alpha_{0a} = \alpha_1 - (\alpha_1 - \alpha_0)/g_{ratio}$
				        
				        \STATE \textbf{Compute the weighted covariance matrices}: $\hat{\mathbf{C}}_{1} \leftarrow \alpha_{1a}\mathbf{C}_{\mathbf{X}} + (1-\alpha_{1a})\left( \frac{\mathbf{X}_A^T\mathbf{X}_A}{n_A} - \frac{\mathbf{X}_B^T\mathbf{X}_B}{n_B} \right)$ and $\hat{\mathbf{C}}_{0} \leftarrow \alpha_{0a}\mathbf{C}_{\mathbf{X}} + (1-\alpha_{0a})\left( \frac{\mathbf{X}_A^T\mathbf{X}_A}{n_A} - \frac{\mathbf{X}_B^T\mathbf{X}_B}{n_B} \right)$
				        
				        \STATE \textbf{Compute the candidates for $\mathbf{U}$}: $\mathbf{U}_{1} \leftarrow eig \left(\hat{\mathbf{C}}_{1}, r \right)$ and $\mathbf{U}_{0} \leftarrow eig \left(\hat{\mathbf{C}}_{0}, r \right)$
				        
				        \STATE \textbf{Compute the averaged reconstruction errors}: $\bar{\mathcal{R}}_{\mathbf{X}_{A}}(\mathbf{U}_{1}) \leftarrow \frac{\left\|\mathbf{X}_{A} - \mathbf{X}_{A}\mathbf{U}_{1}\mathbf{U}_{1}^T\right\|_{F}^{2}}{n_{A}}$, $\bar{\mathcal{R}}_{\mathbf{X}_{A}}(\mathbf{U}_{0}) \leftarrow \frac{\left\|\mathbf{X}_{A} - \mathbf{X}_{A}\mathbf{U}_{0}\mathbf{U}_{0}^T\right\|_{F}^{2}}{n_{A}}$, $\bar{\mathcal{R}}_{\mathbf{X}_{B}}(\mathbf{U}_{1}) \leftarrow \frac{\left\|\mathbf{X}_{B} - \mathbf{X}_{B}\mathbf{U}_{1}\mathbf{U}_{1}^T\right\|_{F}^{2}}{n_{B}}$ and $\bar{\mathcal{R}}_{\mathbf{X}_{B}}(\mathbf{U}_{0}) \leftarrow \frac{\left\|\mathbf{X}_{B} - \mathbf{X}_{B}\mathbf{U}_{0}\mathbf{U}_{0}^T\right\|_{F}^{2}}{n_{B}}$
				        
				        \STATE \textbf{Compute the fairness measures}: $\mathcal{F}_{\mathbf{X}_B,\mathbf{X}_A}(\mathbf{U}_{1}) \leftarrow \left( \bar{\mathcal{R}}_{\mathbf{X}_{A}}(\mathbf{U}_{1}) - \bar{\mathcal{R}}_{\mathbf{X}_{B}}(\mathbf{U}_{1}) \right)^2$ and $\mathcal{F}_{\mathbf{X}_B,\mathbf{X}_A}(\mathbf{U}_{0}) \leftarrow \left( \bar{\mathcal{R}}_{\mathbf{X}_{A}}(\mathbf{U}_{0}) - \bar{\mathcal{R}}_{\mathbf{X}_{B}}(\mathbf{U}_{0}) \right)^2$
				        
				        \IF{$\mathcal{F}_{\mathbf{X}_B,\mathbf{X}_A}(\mathbf{U}_{0}) \leq \mathcal{F}_{\mathbf{X}_B,\mathbf{X}_A}(\mathbf{U}_{1})$}
				                \STATE $\alpha_1 \leftarrow \alpha_{1a}$
				        \ELSE
				                \STATE $\alpha_0 \leftarrow \alpha_{0a}$
				        \ENDIF
				\ENDWHILE
				
				\STATE $\alpha \leftarrow (\alpha_1 + \alpha_0)/2$

				\STATE 6: \textbf{Compute the weighted covariance matrix}: $\hat{\mathbf{C}} \leftarrow \alpha \mathbf{C}_{\mathbf{X}} + (1-\alpha)\left( \frac{\mathbf{X}_A^T\mathbf{X}_A}{n_A} - \frac{\mathbf{X}_B^T\mathbf{X}_B}{n_B} \right)$
				
				\STATE 7: \textbf{Compute the fair projection matrix}: $\mathbf{U} \leftarrow eig \left(\hat{\mathbf{C}}, r \right)$
    \end{algorithmic}
\end{algorithm}

\subsection{Constrained fair dimensionality reduction}

Without further considerations, the optimization problem~\eqref{eq:fair_PCA_proposal} will only minimize the fairness measure. In this case, there are two possible scenarios for enhancing fairness, either by degrading both $\bar{\mathcal{R}}_{\mathbf{X}_A}(\mathbf{U})$ and $\approx \bar{\mathcal{R}}_{\mathbf{X}_B}(\mathbf{U})$ (group $A$ more than group $B$) or by degrading $\frac{1}{n_A}\mathcal{R}_A(\mathbf{U})$ while improving $\frac{1}{n_B}\mathcal{R}_B(\mathbf{U})$. We would say that the first scenario may be an inconvenient, since we must reduce the quality of representation of both groups. However, the second scenario may be acceptable, since in order to reduce the disparities between the reconstruction errors, the privileged group will ``resign'' part of its quality in representation in order to improve the harmed one. Therefore, with the purpose of ensuring that one only achieves a solution in the second scenario, one should adopt a constraint in the one-dimensional search. In the second approach proposed in this paper (called here \textit{c-FPCA}), the optimization problem can be formulated as follows:
\begin{equation}
\begin{array}{ll}
    \underset{\alpha}{\min} & \left( \frac{\left\|\mathbf{X}_B - \mathbf{X}_B\mathbf{UU}^T\right\|_{F}^{2}}{n_B} - \frac{\left\|\mathbf{X}_A - \mathbf{X}_A\mathbf{UU}^T\right\|_{F}^{2}}{n_A} \right)^2 \\
    \text{s.t.} & \bar{\mathcal{R}}_k(\mathbf{U}) \leq \bar{\mathcal{R}}_B^{PCA}(\tilde{\mathbf{U}}), \, \, \, \forall k \in \left\{A,B \right\} \\
\end{array}
\label{eq:fair_PCA_proposal_2}
\end{equation}
where the columns of $\mathbf{U} \in \mathbb{R}^{d \times r}$ are the same as before and $\bar{\mathcal{R}}_B^{PCA}(\tilde{\mathbf{U}})$ is the overall reconstruction error of group $B$ (the unprivileged one) by considering the projection matrix $\mathbf{\tilde{U}}$ obtained from the solution of the classical PCA on all dataset $\mathbf{X}$. In order to solve~\eqref{eq:fair_PCA_proposal_2}, one may also use a one-dimensional search algorithm, such as the golden section search, and include the aforementioned constraint. For instance, one may start from $\alpha = 1$ (which is the solution of the classical PCA and, therefore, a feasible solution) and decrease its value until achieve the minimum of the fairness measure while preserving $\bar{\mathcal{R}}_A(\mathbf{U}),\bar{\mathcal{R}}_B(\mathbf{U}) \leq \bar{\mathcal{R}}_B^{PCA}(\tilde{\mathbf{U}})$. Algorithm~\ref{alg:cfpca} presents the steps of the c-FPCA.

\begin{algorithm}[h!t]
    \caption{(\textit{c-FPCA})}
    \label{alg:cfpca}
		\begin{algorithmic}
				\STATE \textbf{Input:} Dataset $\mathbf{X}$, sensitive data $\mathbf{X}_{G_1}$ and $\mathbf{X}_{G_2}$, data sizes $n$, $n_{G_1}$ and $n_{G_2}$, reduced dimension $r$ and tolerance $tol$ for the golden section search.
				
				\STATE \textbf{Output:} Fair projection matrix $\mathbf{U}$.
				
				\STATE 1: \textbf{Compute the covariance matrix of $\mathbf{X}$}: $\mathbf{C}_{\mathbf{X}} \leftarrow \frac{\mathbf{X}^T\mathbf{X}}{n}$
				
				\STATE 2: \textbf{Compute PCA}: $\bar{\mathbf{U}} \leftarrow eig \left(\mathbf{C}_{\mathbf{X}}, r \right)$
				
				\STATE 3: \textbf{Compute the averaged reconstruction errors:} $\bar{\mathcal{R}}_{\mathbf{X}_{G_1}}^{PCA}(\bar{\mathbf{U}}) \leftarrow \frac{\left\|\mathbf{X}_{G_1} - \mathbf{X}_{G_1}\bar{\mathbf{U}}\bar{\mathbf{U}}^T\right\|_{F}^{2}}{n_{G_1}}$ and $\bar{\mathcal{R}}_{\mathbf{X}_{G_2}}^{PCA}(\bar{\mathbf{U}}) \leftarrow \frac{\left\|\mathbf{X}_{G_2} - \mathbf{X}_{G_2}\bar{\mathbf{U}}\bar{\mathbf{U}}^T\right\|_{F}^{2}}{n_{G_2}}$
				
				\STATE 4: \textbf{Define the privileged and harmed groups}:
				\IF{$\bar{\mathcal{R}}_{\mathbf{X}_{G_1}}^{PCA}(\bar{\mathbf{U}}) \leq \bar{\mathcal{R}}_{\mathbf{X}_{G_2}}^{PCA}(\bar{\mathbf{U}})$}
						\STATE $A \leftarrow G_1$, $n_A = n_{G_1}$, $B \leftarrow G_2$ and $n_B = n_{G_2}$
				\ELSE
						\STATE $A \leftarrow G_2$, $n_A = n_{G_2}$, $B \leftarrow G_1$ and $n_B = n_{G_1}$
				\ENDIF
				
				\STATE 5: \textbf{Apply the golden section search}:
				
				\STATE \textbf{Define}: $\alpha_0 \leftarrow 0$, $\alpha_1 \leftarrow 1$ and $g_{ratio} \leftarrow (\sqrt{5} + 1)/2$
				
				\WHILE{$\alpha_1 - \alpha_0 \leq tol$}
				        \STATE \textbf{Compute the candidates for $\alpha$}: $\alpha_{1a} \leftarrow \alpha_0 + (\alpha_1 - \alpha_0)/g_{ratio}$ and $\alpha_{0a} = \alpha_1 - (\alpha_1 - \alpha_0)/g_{ratio}$
				        
				        \STATE \textbf{Compute the weighted covariance matrices}: $\hat{\mathbf{C}}_{1} \leftarrow \alpha_{1a}\mathbf{C}_{\mathbf{X}} + (1-\alpha_{1a})\left( \frac{\mathbf{X}_A^T\mathbf{X}_A}{n_A} - \frac{\mathbf{X}_B^T\mathbf{X}_B}{n_B} \right)$ and $\hat{\mathbf{C}}_{0} \leftarrow \alpha_{0a}\mathbf{C}_{\mathbf{X}} + (1-\alpha_{0a})\left( \frac{\mathbf{X}_A^T\mathbf{X}_A}{n_A} - \frac{\mathbf{X}_B^T\mathbf{X}_B}{n_B} \right)$
				        
				        \STATE \textbf{Compute the candidates for $\mathbf{U}$}: $\mathbf{U}_{1} \leftarrow eig \left(\hat{\mathbf{C}}_{1}, r \right)$ and $\mathbf{U}_{0} \leftarrow eig \left(\hat{\mathbf{C}}_{0}, r \right)$
				        
				        \STATE \textbf{Compute the averaged reconstruction errors}: $\bar{\mathcal{R}}_{\mathbf{X}_{A}}(\mathbf{U}_{1}) \leftarrow \frac{\left\|\mathbf{X}_{A} - \mathbf{X}_{A}\mathbf{U}_{1}\mathbf{U}_{1}^T\right\|_{F}^{2}}{n_{A}}$, $\bar{\mathcal{R}}_{\mathbf{X}_{A}}(\mathbf{U}_{0}) \leftarrow \frac{\left\|\mathbf{X}_{A} - \mathbf{X}_{A}\mathbf{U}_{0}\mathbf{U}_{0}^T\right\|_{F}^{2}}{n_{A}}$, $\bar{\mathcal{R}}_{\mathbf{X}_{B}}(\mathbf{U}_{1}) \leftarrow \frac{\left\|\mathbf{X}_{B} - \mathbf{X}_{B}\mathbf{U}_{1}\mathbf{U}_{1}^T\right\|_{F}^{2}}{n_{B}}$ and $\bar{\mathcal{R}}_{\mathbf{X}_{B}}(\mathbf{U}_{0}) \leftarrow \frac{\left\|\mathbf{X}_{B} - \mathbf{X}_{B}\mathbf{U}_{0}\mathbf{U}_{0}^T\right\|_{F}^{2}}{n_{B}}$
				        
				        \STATE \textbf{Compute the fairness measures}: $\mathcal{F}_{\mathbf{X}_B,\mathbf{X}_A}(\mathbf{U}_{1}) \leftarrow \left( \bar{\mathcal{R}}_{\mathbf{X}_{A}}(\mathbf{U}_{1}) - \bar{\mathcal{R}}_{\mathbf{X}_{B}}(\mathbf{U}_{1}) \right)^2$ and $\mathcal{F}_{\mathbf{X}_B,\mathbf{X}_A}(\mathbf{U}_{0}) \leftarrow \left( \bar{\mathcal{R}}_{\mathbf{X}_{A}}(\mathbf{U}_{0}) - \bar{\mathcal{R}}_{\mathbf{X}_{B}}(\mathbf{U}_{0}) \right)^2$
				        
				        \IF{$\mathcal{F}_{\mathbf{X}_B,\mathbf{X}_A}(\mathbf{U}_{0}) \leq \mathcal{F}_{\mathbf{X}_B,\mathbf{X}_A}(\mathbf{U}_{1})$}
				                \IF{$\bar{\mathcal{R}}_{\mathbf{X}_{A}}(\mathbf{U}_{0}) \leq \bar{\mathcal{R}}_{\mathbf{X}_{B}}^{PCA}(\bar{\mathbf{U}})$ and $\bar{\mathcal{R}}_{\mathbf{X}_{B}}(\mathbf{U}_{0}) \leq \bar{\mathcal{R}}_{\mathbf{X}_{B}}^{PCA}(\bar{\mathbf{U}})$}
				                        \STATE $\alpha_1 \leftarrow \alpha_{1a}$
				                \ELSE
				                        \STATE $\alpha_0 \leftarrow \alpha_{0a}$
				                \ENDIF
    			        \ELSE
				                \STATE $\alpha_0 \leftarrow \alpha_{0a}$
				        \ENDIF
				\ENDWHILE
				
				\STATE $\alpha \leftarrow (\alpha_1 + \alpha_0)/2$

				\STATE 6: \textbf{Compute the weighted covariance matrix}: $\hat{\mathbf{C}} \leftarrow \alpha \mathbf{C}_{\mathbf{X}} + (1-\alpha)\left( \frac{\mathbf{X}_A^T\mathbf{X}_A}{n_A} - \frac{\mathbf{X}_B^T\mathbf{X}_B}{n_B} \right)$
				
				\STATE 7: \textbf{Compute the fair projection matrix}: $\mathbf{U} \leftarrow eig \left(\hat{\mathbf{C}}, r \right)$
    \end{algorithmic}
\end{algorithm}

\section{Experiments}
\label{sec:exp}

The experiments\footnote{It is worth mentioning that all codes supporting this paper are available in the public repository \url{https://github.com/GuilhermePelegrina/FPCA}. The pre-processed data are available at \url{https://github.com/GuilhermePelegrina/Datasets/tree/main/FPCA}.} benchmark our proposals against the classical PCA and the algorithms MOFPCA~\cite{Pelegrina2021} and FairPCA~\cite{Samadi2018}. We used the following datasets and sensitive attributes (the first two were also used in~\cite{Samadi2018,Pelegrina2021}):
\begin{itemize}
    \item Taiwanese Credit Default (TCRED)\footnote{\url{https://archive.ics.uci.edu/ml/datasets/default+of+credit+card+clients}.}~\cite{Yeh2009}: Personal attributes and credit history information used to predict default payments in Taiwan. It consists of 30000 samples and 22 attributes. We considered as sensitive attribute the education level (higher and lower, with 24615 and 5385 samples, respectively).
    \item Labeled Faces in the Wild (LFW)\footnote{\url{http://vis-www.cs.umass.edu/lfw/}.}~\cite{Huang2008}: Public benchmark of photographs frequently used for face recognition. It consists of 13232 samples and 1764 attributes (36x49 pixels). We considered as sensitive attribute the gender (female and male, with 2962 and 10270 samples, respectively).
    \item The Law School Admissions Council's (LSAC)\footnote{\url{http://www.seaphe.org/databases.php}.}~\cite{Wightman1998}: A dataset collected from law school students that investigates ethical concerns in the bar passage rates. It consists of 26551 samples and 10 attributes. We considered as sensitive attribute the race (black and white + other, with 1790 and 24761 samples, respectively).
\end{itemize}
    
For each one, we collected the predictive attributes and defined the sensitive one, which will be only used to split the dataset into groups $A$ and $B$. Therefore, we do not consider the sensitive attribute in the dimensionality reduction task.

We also evaluate the considered methods in scenarios in which there are unbalanced and balanced datasets with respect to the sensitive attributes. The obtained results are presented in the sequel.

\subsection{Fair dimensionality reduction: Unbalanced datasets}
\label{subsec:exp2}

We first addressed scenarios with unbalanced datasets, i.e., when $n_A \neq n_B$. Figure~\ref{fig:exp_ovre_all} presents the reconstruction errors and the fairness measures for different reduced dimensions. In all cases, the u-FPCA approach led to the lower values of fairness measure. However, since this approach allows an increase of the reconstruction error of both groups in order to achieve fairness, in some datasets we obtained higher overall reconstruction errors in comparison with MOFPCA and FairPCA (see Figure~\ref{fig:exp_ovre}). However, if we consider the c-FPCA approach, we achieve good values of fairness with a small loss in the overall reconstruction error in comparison with the u-FPCA approach. In other words, if we compare the reconstruction errors of groups $A$ and $B$ in different reduced dimensions, the c-FPCA approach could reduce the disparities between these values without damaging the overall reconstruction error. Therefore, by comparing with the classical PCA, we achieved reconstruction errors such that $\bar{\mathcal{R}}_A(\mathbf{U}), \bar{\mathcal{R}}_B(\mathbf{U}) \leq \bar{\mathcal{R}}_B^{PCA}(\tilde{\mathbf{U}})$, where group $B$ is the unprivileged group. Note, in Figure~\ref{fig:exp_re}, that this is not always true for the u-FPCA approach.

\begin{figure*}[h!t]
\centering
\subfloat[]{\includegraphics[width=6.3in]{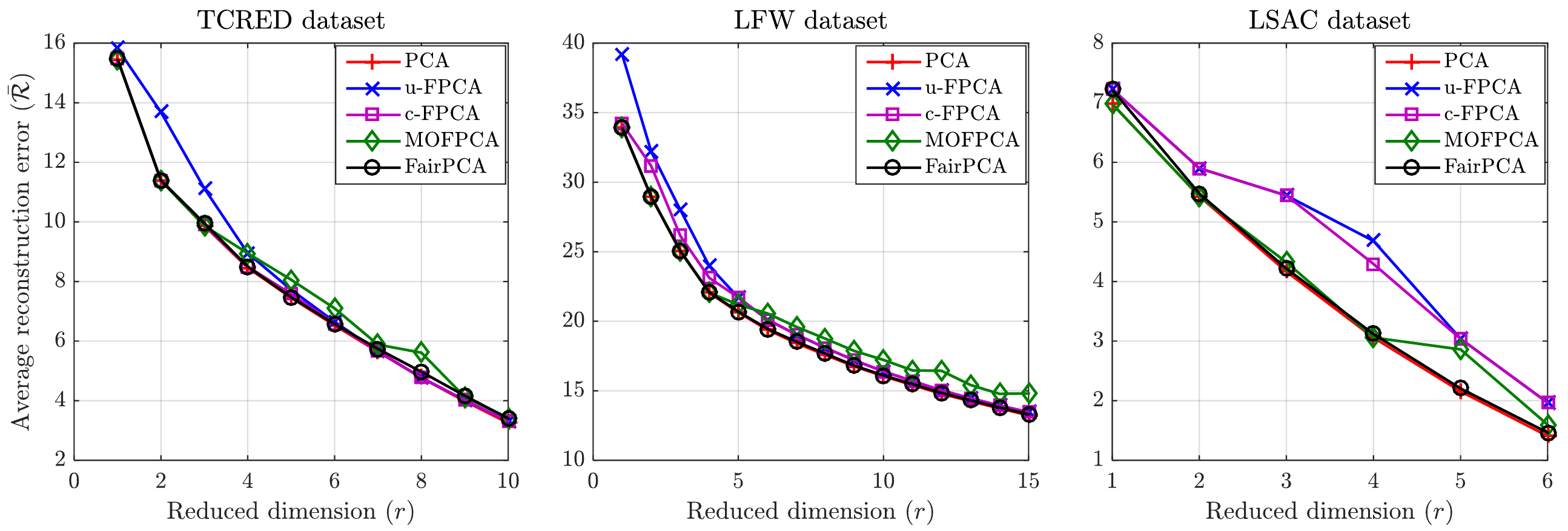}
\label{fig:exp_ovre}}
\hfil
\subfloat[]{\includegraphics[width=6.3in]{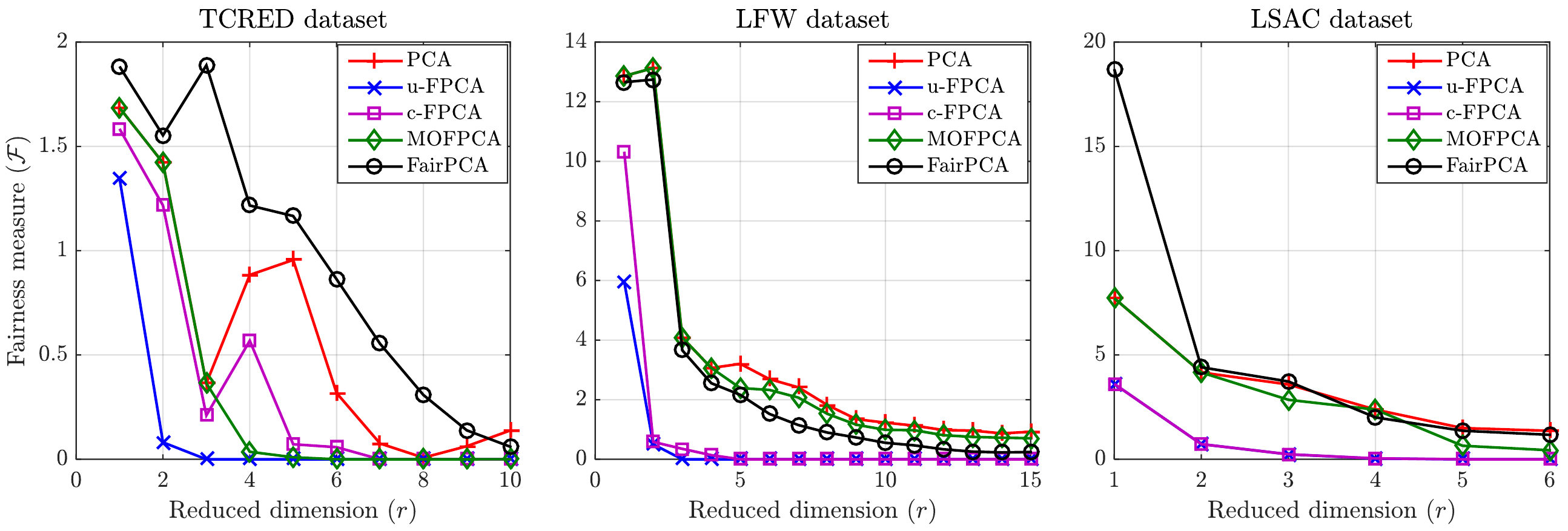}
\label{fig:exp_fm}}
\caption{A comparison between the reconstruction error and the fairness measure in unbalanced datasets.~\ref{fig:exp_ovre} Reconstruction error.~\ref{fig:exp_fm} Fairness measure.}
\label{fig:exp_ovre_all}
\end{figure*}

Although the c-FPCA approach led to interesting results in fair dimension reduction problems, there was a specific scenario in which the MOFPCA achieved better results in terms of fairness. In the TCRED dataset and for a dimensionality reduction into 4, 5 and 6-dimensional samples, the MOFPCA practically eliminated the disparity between the sensitive groups. However, as can be seen in Figure~\ref{fig:exp_re_credit}, it payed the same price as the u-FPCA: the reconstruction error of both groups increased. As we have already mentioned, this can be a problem in practical applications, and a projection with a better compromise between the objectives, such as the one provided by the c-FPCA proposed approach, could be a good solution for the dimensionality reduction problem. With respect to the FairPCA algorithm, most results are close (or even worst) to the classical PCA (see Figure~\ref{fig:exp_ovre_all} ). However, it is important to recall that the fairness measure in FairPCA is different from the one adopted in this paper.

\begin{figure*}[h!t]
\centering
\subfloat[]{\includegraphics[width=6.3in]{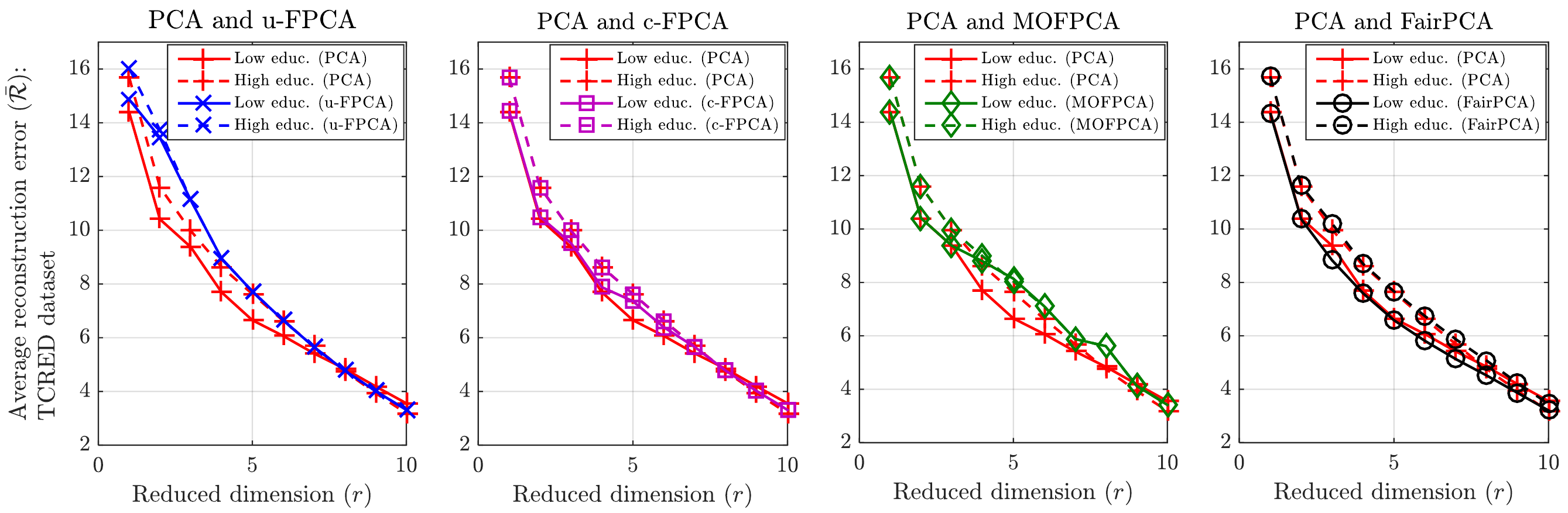}
\label{fig:exp_re_credit}}
\hfil
\subfloat[]{\includegraphics[width=6.3in]{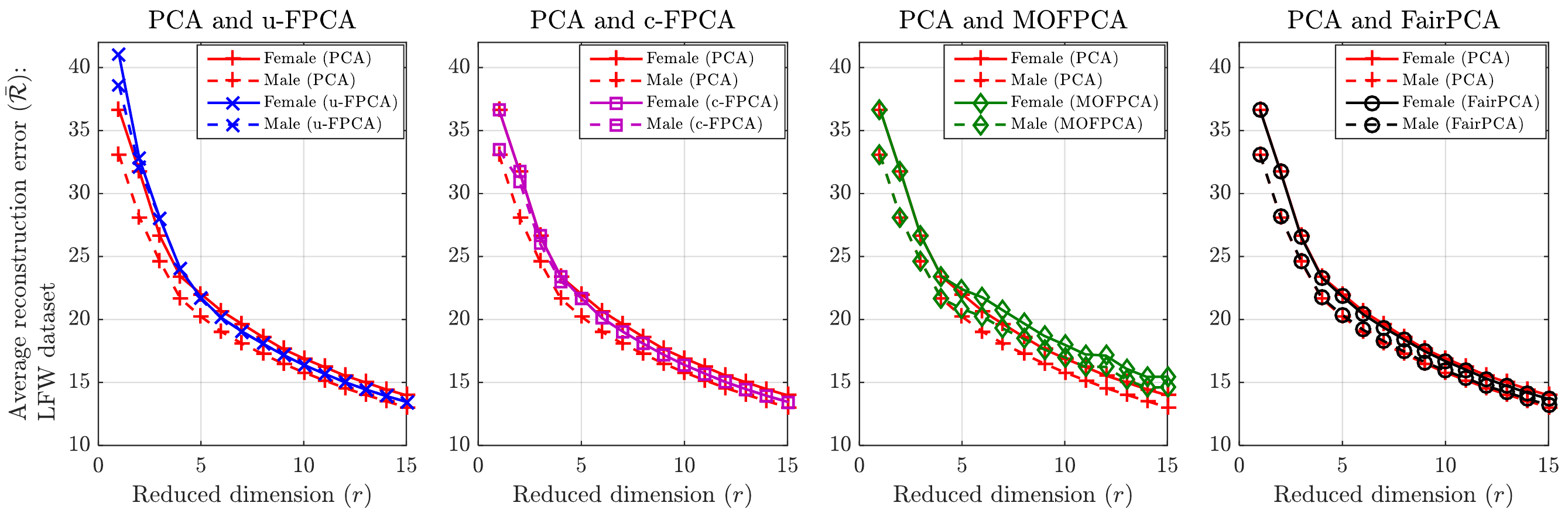}
\label{fig:exp_re_lfw}}
\hfil
\subfloat[]{\includegraphics[width=6.3in]{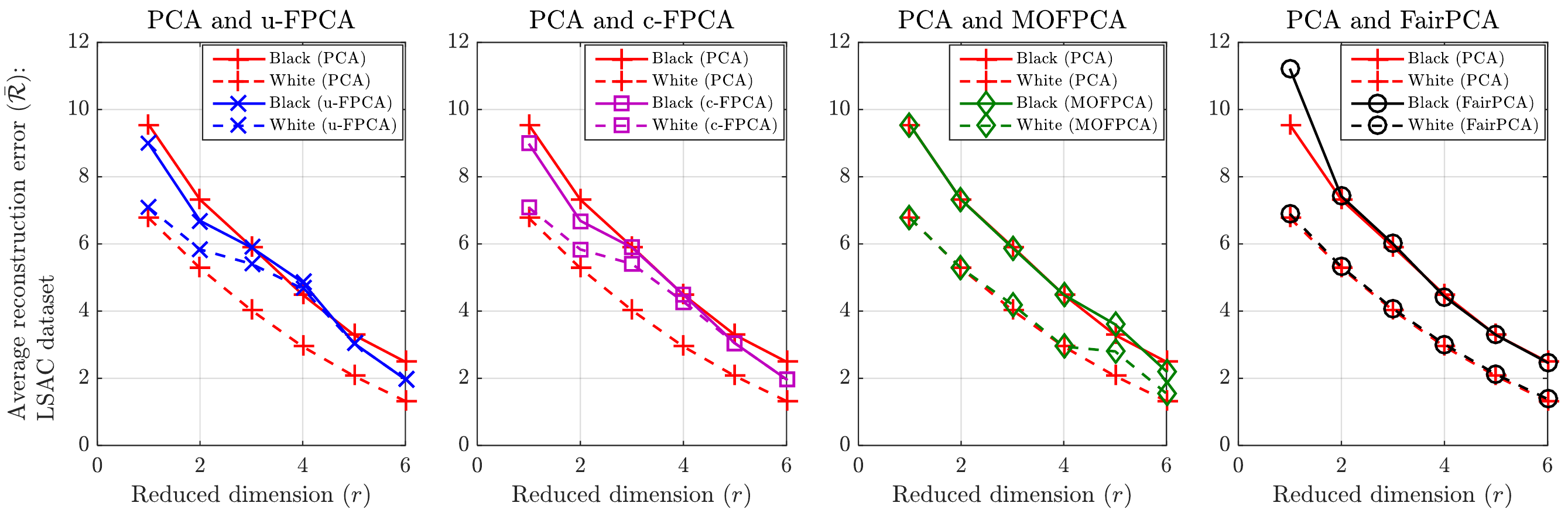}
\label{fig:exp_re_lsac_bw}}
\caption{Reconstruction errors of each sensitive group in unbalanced datasets.~\ref{fig:exp_re_credit} TCRED dataset.~\ref{fig:exp_re_lfw} LFW dataset.~\ref{fig:exp_re_lsac_bw} LSAC dataset.}
\label{fig:exp_re}
\end{figure*}

\subsection{Fair dimensionality reduction: Balanced datasets}
\label{subsec:exp3}

In the previous experiment, we considered unbalanced datasets and showed the interesting results provided by the proposed c-FPCA approach. However, since some bias in machine learning comes from unbalanced datasets, there may be a concern if our proposals only works on such scenarios. Motivated by this question, in this experiment, we verify the consistency of our proposals in balanced scenarios for each dataset. For this purpose, we considered the same datasets and selected a subset of samples\footnote{We selected the first $n_{min}$ samples of each group, where $n_{min}$ is the minimum between $n_A$ and $n_B$.} such that $n_A = n_B$. Therefore, there will not be an interference of unbalanced data in the dimension reduction problem.

Figures~\ref{fig:exp_ovre_eq_all} and~\ref{fig:exp_re_eq} present the obtained results. Similarly as in the previous experiment, the u-FPCA approach achieved very good results in terms of fairness. However, there were some considerable loss in the overall reconstruction error. Although it reduced the disparity, the reconstruction error of both groups increased (see the u-FPCA results in Figure~\ref{fig:exp_re_eq}). Conversely, for all datasets, the c-FPCA approach could improve fairness with a very small loss in the overall reconstruction error. Therefore, it was consistent with the previous experiment.

\begin{figure*}[h!t]
\centering
\subfloat[]{\includegraphics[width=6.3in]{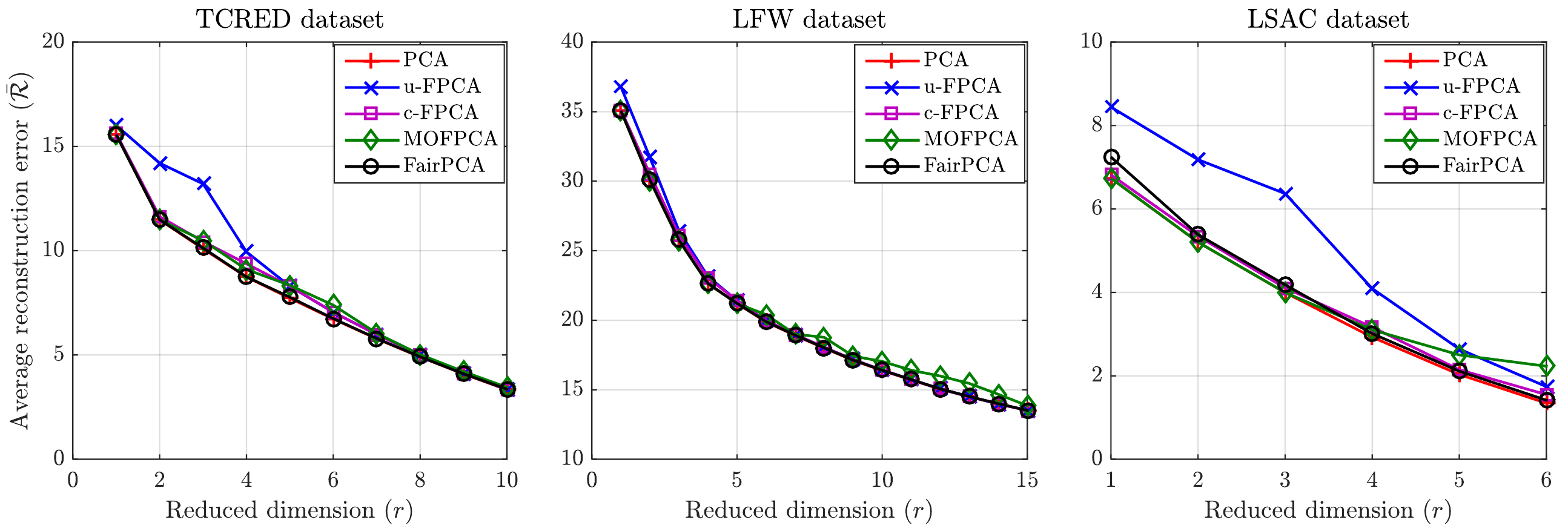}
\label{fig:exp_ovre_eq}}
\hfil
\subfloat[]{\includegraphics[width=6.3in]{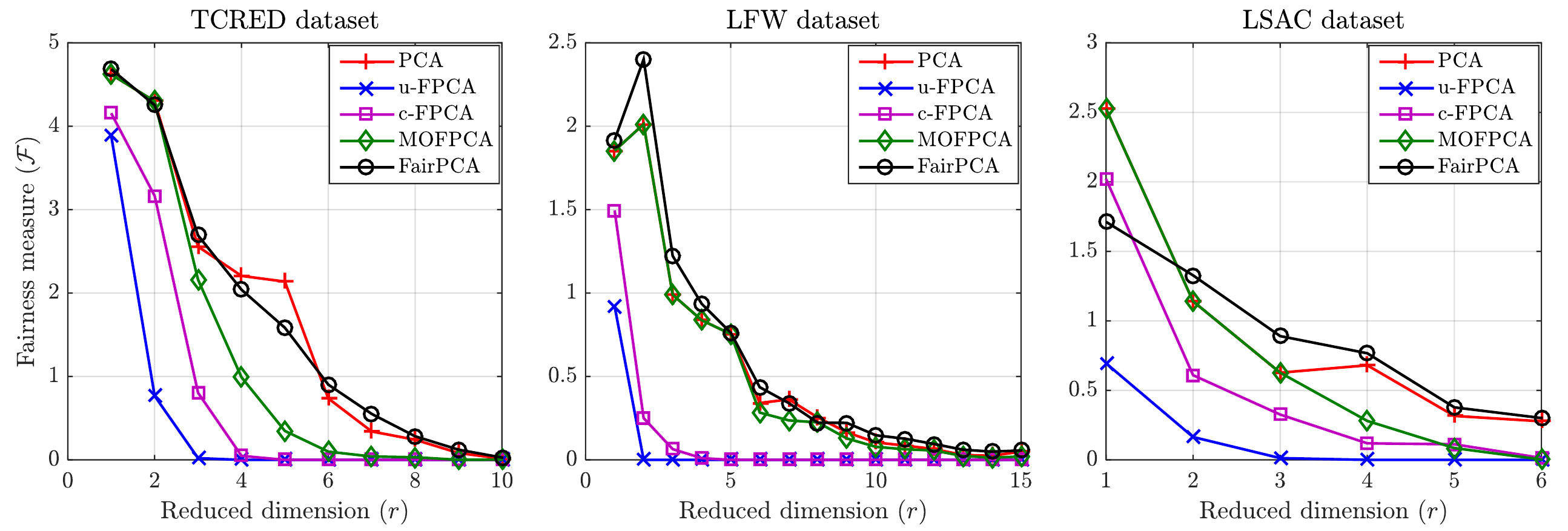}
\label{fig:exp_fm_eq}}
\caption{A comparison between the reconstruction error and the fairness measure in balanced datasets.~\ref{fig:exp_ovre_eq} Reconstruction error.~\ref{fig:exp_fm_eq} Fairness measure.}
\label{fig:exp_ovre_eq_all}
\end{figure*}

\begin{figure*}[h!t]
\centering
\subfloat[]{\includegraphics[width=6.3in]{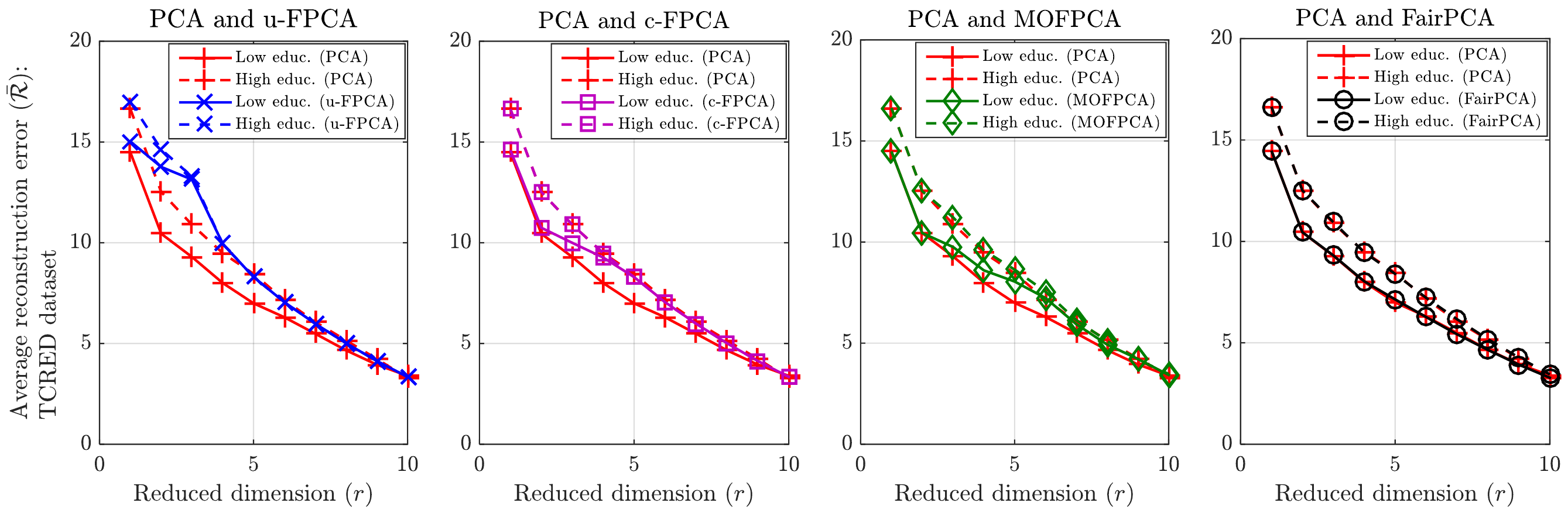}
\label{fig:exp_re_credit_eq}}
\hfil
\subfloat[]{\includegraphics[width=6.3in]{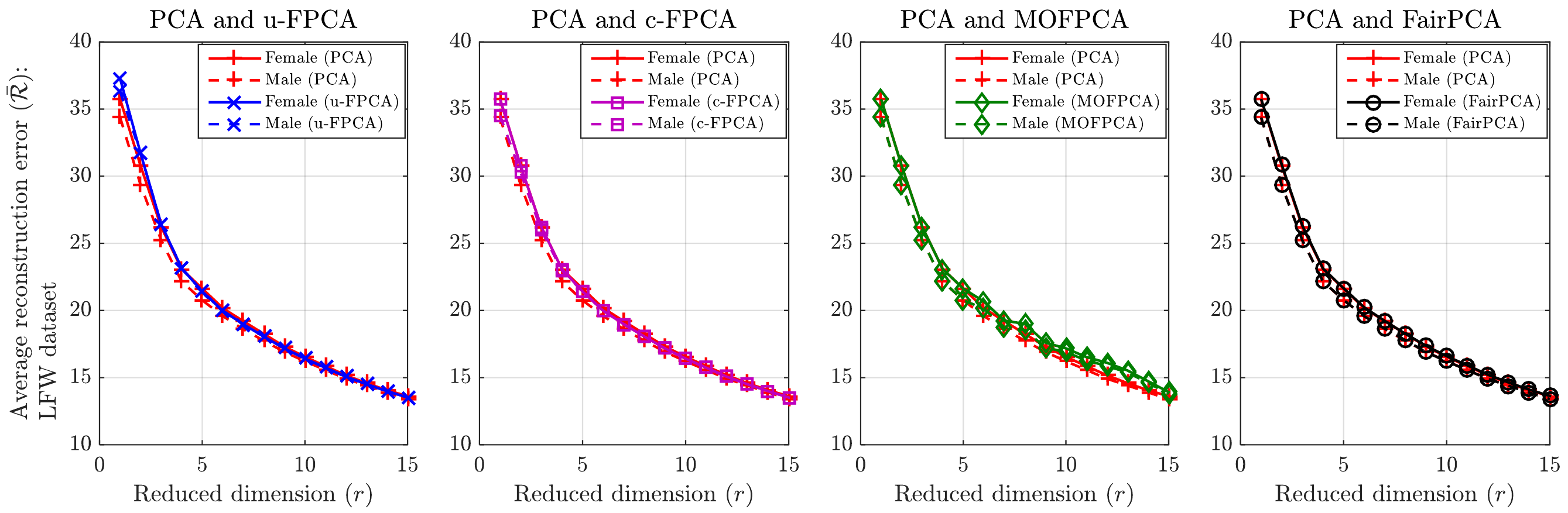}
\label{fig:exp_re_lfw_eq}}
\hfil
\subfloat[]{\includegraphics[width=6.3in]{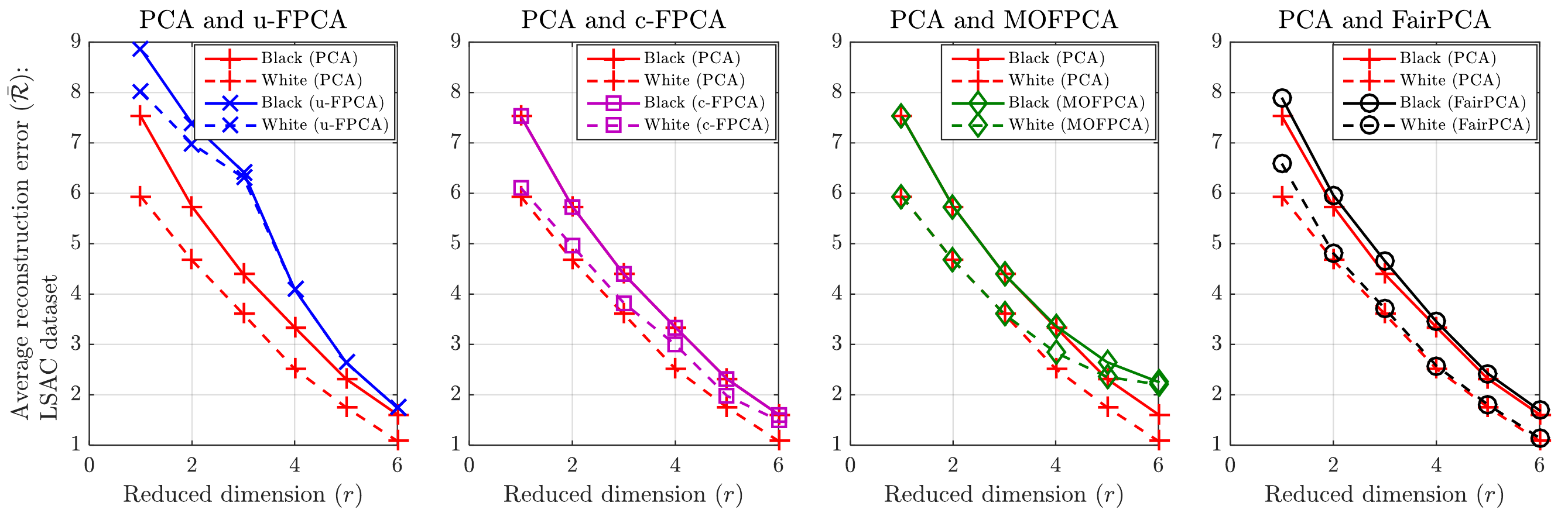}
\label{fig:exp_re_lsac_bw_eq}}
\caption{Reconstruction errors of each sensitive group in balanced datasets.~\ref{fig:exp_re_credit_eq} TCRED dataset.~\ref{fig:exp_re_lfw_eq} LFW dataset.~\ref{fig:exp_re_lsac_bw_eq} LSAC dataset.}
\label{fig:exp_re_eq}
\end{figure*}

If one analyzes the results provided by the MOFPCA, there was a slightly improvement in some reduced dimensions (see Figure~\ref{fig:exp_fm_eq}). However, the c-FPCA could further improve fairness with a lower loss in the overall reconstruction error. With respect to the FairPCA, the achievements were consistent with the previous experiment, with results very similar to the classical PCA, or even worst in some scenarios (see the TCRED and the LSAC datasets in Figure~\ref{fig:exp_ovre_eq_all}). We note such a similarity in the reconstruction errors of both sensitive groups, which are practically the same as in the application of the classical PCA (see the FairPCA results in Figure~\ref{fig:exp_re_eq}).

\section{Conclusion}
\label{sec:concl}

In this paper, we addressed the problem of fair dimensionality reduction based on principal component analysis. Since the classical PCA only minimizes the overall reconstruction error of a dataset, it was not conceived in order to avoid possible disparities between sensitive groups. As a consequence, the application of such a technique may lead to a reduced dataset in which a specific group is underrepresented with respect to another one. This may create (or even increase) social bias.

In order to maximize the information retained in the dimensionality reduction while mitigating disparities between sensitive groups, we formulate an optimization problem that exploits both overall reconstruction error and fairness measure when searching for the projection matrix. We also proved that the solution of such a formulation is as simple as the solution of the classical PCA, which consists of an eigendecomposition. Moreover, we proposed two one-dimensional algorithms, which exploit eigendecomposition solutions, to achieve a fair dimensionality reduction. Therefore, our proposal can be easily deployed in any already running systems.

The experimental results in several real datasets attested that our proposal can find a projection matrix that minimizes the disparity between the sensitive groups without a large loss in the overall reconstruction error. Moreover, in contrast with other existing methods, our results were consistent in scenarios with both unbalanced and balanced data.

It is worth highlighting that this article addresses the fairness issue in unsupervised dimensionality reduction. However, ongoing works extend our proposal in supervised principal component analysis. In this context, the fair dimensionality reduction technique can find a projection matrix that improve the classification accuracy while reducing the disparity between the true/false positive/negative rates of sensitive groups.

\section*{Acknowledgements}

The authors would like to thank the grants \#2020/01089-9, \#2020/09838-0, \#2020/10572-5 and \#2021/11086-0, São Paulo Research Foundation (FAPESP), and the grant \#311357/2017-2, National Council for Scientific and Technological Development (CNPq), for the financial support.

\appendix

\section*{Eigenvectors as a solution for PCA}
\label{app:pca_sol}

In this section, we demonstrate that the solution of PCA consists of the eigenvectors associated with the highest eigenvalues of the covariance matrix of $\mathbf{X}$. We follow an iterative approach~\cite{Jolliffe2002}, which starts by searching for an unitary projection vector $\mathbf{u}_1$ that maximizes the variance of the projected data $\tilde{\mathbf{x}}_1 = \mathbf{X}\mathbf{u}_1$. This variacne is given by $\text{Var}\left[\tilde{\mathbf{x}}_1 \right] = \frac{1}{n}\tilde{\mathbf{x}}_1^T\tilde{\mathbf{x}}_1 = \frac{1}{n} \mathbf{u}_1^T \mathbf{X}^T \mathbf{X}\mathbf{u}_1 = \mathbf{u}_1^T \mathbf{C}_{\mathbf{X}}\mathbf{u}_1$, where $\mathbf{C}_{\mathbf{X}}$ is the covariance of $\mathbf{X}$. The optimization problem is the following:
\begin{equation}
\label{eq:pca_pc1}
\begin{array}{ll}
\displaystyle\max_{\mathbf{u}_1} & \mathbf{u}_1^T\mathbf{C}_{\mathbf{X}}\mathbf{u}_1 \\
\text{s.t.} & \mathbf{u}_1^T\mathbf{u}_1 = 1, 
\end{array}
\end{equation}
This optimization problem can be easily solved by using the Lagrange multipliers~\cite{Vanderbei2014}, which leads to:
\begin{equation}
\label{eq:pca_pc1_lagr}
\begin{array}{ll}
\displaystyle\max_{\mathbf{u}_1, \lambda_1} & \mathbf{u}_1^T\mathbf{C}_{\mathbf{X}}\mathbf{u}_1 - \lambda_1 \left(\mathbf{u}_1^T\mathbf{u}_1 - 1 \right) \\
\end{array}.
\end{equation}
By taking the gradient of this cost function, one obtains that
\begin{equation}
2\mathbf{C}_{\mathbf{X}}\mathbf{u}_1 - 2\lambda_1 \mathbf{u}_1 = \mathbf{0} \rightarrow \mathbf{C}_{\mathbf{X}}\mathbf{u}_1 = \lambda_1 \mathbf{u}_1,
\end{equation}
where one recognizes that $\mathbf{u}_1$ is an eigenvector of $\mathbf{C}_{\mathbf{X}}$ and $\lambda_1$ is the associated eigenvalue. Therefore, the first projection vector is the eigenvector associated with the highest eigenvalue of the covariance matrix $\mathbf{C}_{\mathbf{X}}$. Moreover, since $\mathbf{C}_{\mathbf{X}}\mathbf{u}_1 = \lambda_1 \mathbf{u}_1$, $\text{Var}\left[\tilde{\mathbf{x}}_1 \right] = \mathbf{u}_1^T\mathbf{C}_{\mathbf{X}}\mathbf{u}_1 = \mathbf{u}_1^T\lambda_1 \mathbf{u}_1 = \lambda_1\mathbf{u}_1^T \mathbf{u}_1 = \lambda_1$,
i.e., the variance in $\tilde{\mathbf{x}}_1$ is given by the highest eigenvalue of $\mathbf{C}_{\mathbf{X}}$.

Once one has found the first principal component, one may move to the second one. Other than the constraint that ensures a unitary vector, one also needs to guarantee that the second principal component is orthogonal to the first one, i.e., $\mathbf{u}_2^T\mathbf{u}_1 = 0$. This leads to the following optimization problem:
\begin{equation}
\label{eq:pca_pc2}
\begin{array}{ll}
\displaystyle\max_{\mathbf{u}_2} & \mathbf{u}_2^T\mathbf{C}_{\mathbf{X}}\mathbf{u}_2 \\
\text{s.t.} & \mathbf{u}_2^T\mathbf{u}_2 = 1, \\
 & \mathbf{u}_2^T\mathbf{u}_1 = 0, \\
\end{array}
\end{equation}
One may also deal with~\eqref{eq:pca_pc2} by means of the Lagrange multipliers:
\begin{equation}
\label{eq:pca_pc2_lagr}
\begin{array}{ll}
\displaystyle\max_{\mathbf{u}_2, \lambda_2} & \mathbf{u}_2^T\mathbf{C}_{\mathbf{X}}\mathbf{u}_2 - \lambda_2 \left(\mathbf{u}_2^T\mathbf{u}_2 - 1 \right) - \phi \left(\mathbf{u}_2^T\mathbf{u}_1 \right)\\
\end{array}
\end{equation}
By taking the gradient of this cost function, one obtains that
\begin{equation}
\label{eq:grad_pca_1pc}
2\mathbf{C}_{\mathbf{X}}\mathbf{u}_2 - 2\lambda_2 \mathbf{u}_2 - \phi \mathbf{u}_1 = \mathbf{0}.
\end{equation}
If one multiplies the aforementioned equation on the left by $\mathbf{u}_1^T$ and since one assumed that $\mathbf{u}_2^T\mathbf{u}_1 = \mathbf{u}_1^T\mathbf{u}_2 = 0$, one obtains that
\begin{equation}
2\mathbf{u}_1^T\mathbf{C}_{\mathbf{X}}\mathbf{u}_2 - 2\lambda_2 \mathbf{u}_1^T\mathbf{u}_2 - \phi \mathbf{u}_1^T\mathbf{u}_1 = \mathbf{0} \rightarrow 2\mathbf{u}_1^T\mathbf{C}_{\mathbf{X}}\mathbf{u}_2 = \phi \mathbf{u}_1^T\mathbf{u}_1.
\end{equation}
Since we want that the projections $\tilde{\mathbf{x}}_1 = \mathbf{X}\mathbf{u}_1$ and $\tilde{\mathbf{x}}_2 = \mathbf{X}\mathbf{u}_2$ are uncorrelated, i.e., $\frac{1}{n}\mathbf{u}_1^T\mathbf{X}^T\mathbf{X}\mathbf{u}_2 = \mathbf{u}_1^T\mathbf{C}_{\mathbf{X}}\mathbf{u}_2 = 0$, this implies that $\phi \mathbf{u}_1^T\mathbf{u}_1 = 0$ and, therefore, $\phi$ must be equal to zero. One may also verify this condition on $\phi$ by taking~\eqref{eq:grad_pca_1pc} and, since $\mathbf{u}_1^T\mathbf{C}_{\mathbf{X}}\mathbf{u}_2 = \mathbf{u}_2^T\mathbf{C}_{\mathbf{X}}\mathbf{u}_1 = \mathbf{u}_2^T \lambda_1 \mathbf{u}_1 = \lambda_1 \mathbf{u}_2^T \mathbf{u}_1 = 0$, $\phi = 0$. This leads to the following expression:
\begin{equation}
2\mathbf{C}_{\mathbf{X}}\mathbf{u}_2 - 2\lambda_2 \mathbf{u}_2 = \mathbf{0} \rightarrow \mathbf{C}_{\mathbf{X}}\mathbf{u}_2 = \lambda_2 \mathbf{u}_2,
\end{equation}
where one also recognizes that $\mathbf{u}_2$ is an eigenvector of $\mathbf{C}_{\mathbf{X}}$ and $\lambda_2$ is the associated eigenvalue. In other words, the second principal component coefficients are the eigenvector associated with the second highest eigenvalue of $\mathbf{C}_{\mathbf{X}}$. Moreover, the variance in the projected data $\mathbf{\tilde{x}}_2$ is also equivalent to the second highest eigenvalue of $\mathbf{C}_{\mathbf{X}}$.

For the next $r-2$ principal components, the aforementioned technique can be iteratively used to find the projection matrix $\mathbf{U}$ whose columns are composed by the eigenvectors $\mathbf{u}_1, \mathbf{u}_2, \ldots, \mathbf{u}_r$ of $\mathbf{C}_{\mathbf{X}}$.

\bibliographystyle{unsrt}

\bibliography{_references}

\end{document}